\documentclass{IEEEtran}
\usepackage{cite}
\usepackage{textcomp}
\usepackage{amsmath,amssymb,array,arydshln,color, bm}
\usepackage{amsthm}
\usepackage{cuted}
\usepackage{soul}
\usepackage{flushend}
\usepackage{mathtools}
\usepackage{subcaption}
\usepackage{subfiles}
\usepackage{algpseudocode}
\usepackage{algorithm}
\usepackage{url}
\usepackage{hyperref}
\hypersetup{colorlinks, linkcolor=blue}
\usepackage{comment}
\usepackage{color}
\usepackage{multirow}
\usepackage{etoolbox}

\newtheorem{proposition}{Proposition}
\newtheorem{theorem}{Theorem}

\usepackage[colorinlistoftodos, textwidth=2cm, shadow]{todonotes}

\newcommand{\norm}[1]{\left\lVert#1\right\rVert}
\newcommand{\vertexrep}[1]{\text{vertex}(#1)}
\newcommand{\BW}[1]{{\color{black} #1}}
\newcommand{\NEW}[1]{{\color{black} #1}}

\newbool{arxiv}
\setbool{arxiv}{true}
\newif\ifappendix
\appendixtrue

\def\NrObst{n_{o}}

\def\NrDOF{n_c}
\def\NrHorizon{H}
\def\NrAuxSteps{n_a}

\def\DimParamME{n_{p}}
\def\DimState{n_{x}}
\def\DimConf{\NrDOF}


\def\tubeGrowth{\tilde{\rho}}

\def\tubeSize{\delta}

\def\timeStep{T_{\text{s}}}
\def\tubeSizePred{\hat{\tubeSize}}
\def\xNomPred{\hat{\xNom}}
\def\c{\textbf{c}}
\def\e{\textbf{e}}

\def\zeroVec{\textbf{0}}
\def\eTrf{\tilde{\e}}
\def\eTrfStar{\tilde{\e}^{\star}}

\def\v{\textbf{v}}

\def\q{\textbf{q}}
\def\qdot{\dot{\textbf{q}}}
\def\qddot{\ddot{\textbf{q}}}
\def\qStart{\q_{\text{s}}}
\def\qGoal{\q_{\text{g}}}
\def\qGoalVirtual{\tilde{\q}_{\text{g}}}
\def\qc{\textbf{q}_{\text{c}}}

\def\grav{\textbf{g}}
\def\gravNom{\textbf{g}_0}
\def\gravErr{\textbf{g}_{\paramME}}
\def\gravDelta{\tilde{\grav}_{\paramME}}

\def\a{\textbf{a}}
\def\u{\textbf{u}}
\def\bubbleCenter{\textbf{c}}

\def\acc{\textbf{a}}
\def\accNom{\bar{\acc}}
\def\matAccNom{\bar{\textbf{A}}}
\def\setAcc{\mathcal{A}}

\def\b{\textbf{b}}

\def\bx{\textbf{b}_x}

\def\x{\textbf{x}}
\def\xNom{\bar{\x}}

\def\xGoal{\x_{\text{g}}}
\def\xGoalVirtual{\tilde{\x}_{\text{g}}}
\def\qGoalVirtual{\tilde{\c}_{\text{g}}}

\def\matA{\textbf{A}}

\def\matAx{\textbf{A}_x}
\def\matAu{\textbf{A}_u}

\def\matAccNom{\bar{\matA}}
\def\matAcl{\matA_{\text{cl}}}
\def\matB{\textbf{B}}
\def\matK{\textbf{K}}
\def\matP{\textbf{P}}

\def\matPinvSqrt{\matP^{-1/2}}

\def\matY{\textbf{Y}}
\def\matE{\textbf{E}}

\def\matV{\textbf{V}}
\def\matM{\textbf{M}}
\def\matC{\textbf{C}}

\def\matMnom{{\textbf{M}_0}}
\def\matMerr{\textbf{M}_{\paramME}}
\def\matMdelta{\tilde{\matM}_{\paramME}}

\def\matC{\textbf{C}}
\def\matCdelta{\tilde{\textbf{C}}_{\paramME}}
\def\matCnom{\textbf{C}_0}
\def\matCerr{\textbf{C}_{\paramME}}

\def\matQ{\textbf{Q}}
\def\matR{\textbf{R}}

\def\matK{\textbf{K}}

\def\matXnom{\bar{\textbf{X}}}

\def\w{\textbf{w}}
\def\wWC{\bar{w}}
\def\paramME{\bm{\theta}}

\def\setB{\mathcal{B}}

\def\setC{\mathcal{C}}
\def\setCdot{\mathcal{V}}
\def\setCf{\setC_{\text{f}}}
\def\setCo{\setC_{\text{o}}}
\def\setCoBnd{\partial\Co}
\def\setO{\mathcal{O}}

\def\setR{\mathbb{R}}
\def\setRconf{\setR^{\DimConf}}
\def\setRconfByconf{\setR^{\DimConf \times \DimConf}}

\def\setPD{\mathbb{S}_{++}}

\def\setW{\mathcal{W}}
\def\setU{\mathcal{U}}
\def\setX{\mathcal{X}}
\def\setN{\mathbb{N}}
\def\setV{\mathcal{V}}
\def\setFK{\mathcal{FK}}
\def\setD{\mathcal{D}}

\def\setCoBnd{\partial\setCo}
\def\setE{\mathcal{E}}
\def\setEcont{\mathcal{E}_{\acc}}
\def\setBQ{\mathcal{E}_{\q}}

\def\matI{\mathbf{I}}


\def\paramMEset{\Theta}


\def\tupleBalls{\bar{\text{B}}}
\def\tupleCorr{\text{B}}
\def\tuplePath{\text{C}}


\def\funcSD{\text{r}}
\def\funcPath{\bm{\gamma}}

\def\funcFL{\pi_{\text{FL}}}

\def\funcModelError{\Delta_{\paramME}}
\def\funcDiscError{\Delta_{\text{disc}}}
\def\funcModelErrorBound{\beta}

\title{\LARGE \bf Robust Convex Model Predictive Control with collision avoidance guarantees for robot manipulators}
\author{Bernhard Wullt, Johannes Köhler, \IEEEmembership{Member, IEEE}, Per Mattsson, \IEEEmembership{Member, IEEE}, Mikael Norrlöf and Thomas~B.~Schön,~\IEEEmembership{Senior~Member,~IEEE}
\thanks{This research was supported by the \emph{Wallenberg AI, Autonomous Systems and Software Program (WASP)} funded by Knut and Alice Wallenberg Foundation.}
\thanks{Bernhard Wullt and Mikael Norrlöf are with ABB robotics, 721 36 Västerås, Sweden (e-mail: bernhard.wullt@se.abb.com, mikael.norrlof@se.abb.com).}
\thanks{Johannes Köhler is with the Department of Mechanical Engineering, Imperial College London, London, UK, (e-mail: j.kohler@imperial.ac.uk).}
\thanks{Per Mattsson and Thomas B. Schön are with the Department of Information Technology, Uppsala University, 751 05 Uppsala, Sweden (e-mail: per.mattsson@it.uu.se, thomas.schon@uu.se). }
}
\begin{document}
\maketitle
\begin{abstract}
Industrial manipulators typically operate in cluttered environments, where safe motion planning is critical. However, model uncertainties further complicate this task, which leads to conservative speed limits to reduce the influence of disturbances. Hence, there is a need for control methods that can guarantee safe motions which are executed fast. We address this by suggesting a novel model predictive control (MPC) solution for manipulators, where our two main components are a robust tube MPC and a corridor planning algorithm to obtain collision-free motion. Our solution results in a convex MPC formulation, which we can solve fast, making our method practically useful. We demonstrate the efficacy of our method in a simulated environment with a 6 DOF industrial robot operating in cluttered environments with uncertain model parameters. We outperform benchmark methods by tolerating higher levels of model uncertainty while achieving faster motion. 
\end{abstract}
\begin{IEEEkeywords}
Trajectory planning, model predictive control, linear system, feedback linearization.
\end{IEEEkeywords}
\thispagestyle{empty}
\pagestyle{empty}
\section{Introduction}
\NEW{Motion planning and control for robotic manipulators are fundamental components of an autonomous robotics pipeline, which typically includes a task planner, a motion planner, and a controller. The task planner decomposes a high-level objective into a sequence of sub tasks, increasingly leveraging Large Language Models \cite{llm3}, and invokes a motion planner to generate collision-free trajectories for execution. Motion planning is commonly done in a decoupled way \cite{lavallebook}, first, through path planning \cite{rrt}, which then is time scaled using the dynamics model \cite{topps_cvx}. Once the plan is executed, the resulting trajectory is tracked through feedback-linearization \cite{rob_mod_plan_cont} using the full dynamics model to follow it closely. This decoupled approach to planning and control is efficient. However, it relies on accurate modeling and is further limited to stay exactly on the collision-free path, since it is only there it has been certified to be collision-free. Naturally, model uncertainties are part of the problem, which for manipulators can have a wide spread effect due to the coupled dynamics, potentially  putting the robot offtrack, which may result in collisions. Practically, to attenuated its effect, the trajectory is tracked by moving sufficiently slow. Thus, a limiting factor is the lack of methods to guarantee robustness during faster motions.} We address the missing robustness guarantees, \NEW{by proposing an integrated motion planning and control solution, providing} safe and fast motion in cluttered environments. We realize this through two main contributions:
\begin{itemize}
    \item We use feedback linearization to obtain a linear prediction model and derive a state and input dependent upper bound on the resulting model error. We design a tube based model predictive controller (MPC) that utilizes this bound and  optimizes the tube size to reduce conservatism. 
    \item To propagate the tube in the collision-free part of the configuration space, we employ a signed configuration distance function (SCDF), which outputs collision-free balls in the configuration space. This allows us to formulate simple convex constraints for obstacle avoidance, while also propagating the tube in a collision-free region, enabling quick collision-free progress towards the goal.
\end{itemize}
\begin{figure}
\centering
\includegraphics[width=6cm, trim={0cm 4cm 0cm 1cm}, clip]{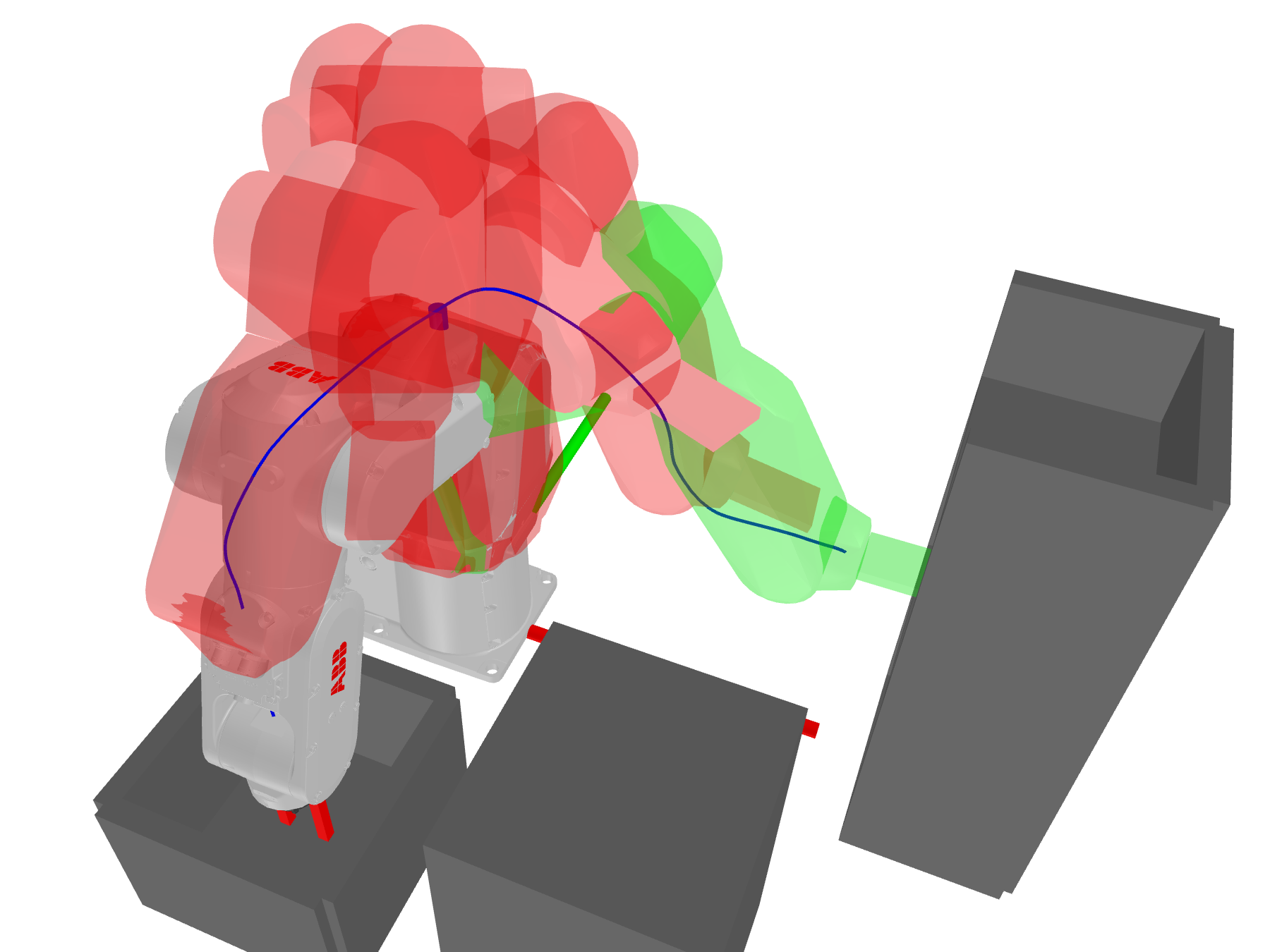}
\caption{Illustration of our convex robust MPC. We produce collision-free motion that is robust to model errors. The closed loop trajectory (transparent red meshes) connect a given start and goal state (white and green meshes) while avoiding the obstacles (gray spheres).}
\label{fig:demo}
\end{figure}
The result is a convex MPC problem, which we can solve efficiently, and a simple approach to guide it to the goal state, resulting in fast and safe motion. This is the first construction of a convex MPC solution that guarantees collision avoidance for nonlinear uncertain dynamics of robots manipulators. We demonstrate the practical applicability of our approach through simulations of a proprietary 6 DOF industrial robot (see Figure~\ref{fig:demo}). In addition, we provide an open-source implementation of the proposed method for general manipulators:
\begin{center}
{\url{https://github.com/berwul/rob_cvx_rmpc}}.    
\end{center}
\newpage
\subsubsection*{Outline} 
Section~\ref{sec:related_work} presents related work. We introduce the notation in Section~\ref{sec:notation} and the problem formulation in Section~\ref{sec:problem_formulation}. Next, we derive our novel robust motion planning solution for manipulators in Section \ref{sec:man_robust_control}. 
Obstacle avoidance is included in Section~\ref{sec:obs_avoid} by proposing the concept of corridor planning and deriving simple convex constraints ensuring collision-free motion. We verify our approach in numerical experiments (Section~\ref{sec:experiments}) and end with conclusions (Section~\ref{sec:conclusions}).
\section{Related work}
\label{sec:related_work}
Forming collision-free regions in the configuration space is challenging for manipulators, due to the non-trivial mapping of world space obstacles to the configuration space. In the past decade, a lot of methods \cite{iris_c, IRIS_nlp, cspf, pbrm} have been developed in order to produce convex collision-free regions in the configuration space, \cite{iris_c, IRIS_nlp} iteratively grows a collision-free ellipsoid through optimization, while~\cite{cspf, pbrm} uses learning to produce collision-free balls. These tools enable new approaches to motion planning, e.g. trajectory planning \cite{mpGcs}, path planning \cite{pbrm}, manipulation planning \cite{cspf}. Our work makes use of these new representation capabilities to formulate convex obstacle avoidance constraints and combines it with a novel robust control formulation to ensure safe and efficient motions.
\\\\
Our approach relies on MPC \cite{mpc_book}, which, \NEW{compared to other integrated motion planning and control methods, e.g., \cite{kdf}}, guarantees satisfaction of state and input constraints.
In particular, we build on MPC-for-tracking formulations~\cite{krupa2024model}, which can progressively reach far away targets by optimizing artificial references. Obstacle avoidance constraints can in principle also be directly added to such a formulation~\cite{set_point_tracking_mpc,safe_n_fast_rmpc}. 
However, especially for robot manipulators these constraints are highly non-linear and non-convex, thus increasing the computational demand. 
Similar to the proposed approach, the work in~\cite{ciao} use convex collision-free balls, however, the methodology is limited to single-body robots with known dynamics. Applications of MPC to robot manipulators are, for example, presented in~\cite{safe_n_fast_rmpc, ddmpc, smg_mpc}. However, these approaches are computationally expensive, can only cover simple collision avoidance constraints or none at all, or are conservative.  In particular, \cite{safe_n_fast_rmpc, ddmpc} formulate non-linear MPC schemes, which becomes computationally expensive. Only simplified collision avoidance constraints are treated in~\cite{safe_n_fast_rmpc}, while~\cite{ddmpc, smg_mpc} do not address obstacle avoidance at all. The robustness guarantees in~\cite{safe_n_fast_rmpc, smg_mpc} are based on a constant (worst-case) bound on the model-mismatch, which neglects the state/input dependent nature of modeling errors, resulting in significant conservatism, while~\cite{ddmpc} lacks robustness guarantees. \BW{In contrast, we construct a collision-free corridor in the configuration space through a SCDF~\cite{pbrm}, resulting in convex obstacle avoidance constraints. We address model error through a robust design, which is tube based. Compared to standard rigid tube MPC \cite{tubeMPC}, which uses a polytopic invariant sets to bound the state trajectories, we use scaled ellipsoids, resulting in a more scalable design and efficient formulation for online control. In particular, we exploit a state and input dependent bound on the model error, allowing us scale the tube in a flexible way, while also reducing conservatism. We end up with an MPC formulation that is convex, which we can solve fast, observing real-time capabilities in numerical experiments.}
\section{Notation}
\label{sec:notation}
The set of positive real numbers is denoted by $\setR_+$.
We denote the set of integers $a$ to $b$, i.e. $\{a, a+1, \hdots, b \}$ by $\setN_{a:b}$. The set of $n$ dimensional positive definite matrices is denoted by $\setPD^{n}$. For a vector $\x\in\mathbb{R}^n$, we denote the 2-norm and infinity-norm as $\norm{\x}=\sqrt{\x^\top\x}$ and $\norm{\x}_{\infty} = \underset{i}{\max} \: |\x_i|$. For a matrix $\matA\in \setR^{n \times m}$,  $\norm{\matA}$ is the induced matrix norm, i.e., the largest singular value of $\matA$. The weighted vector norm is defined as $\norm{\x}_\matA=\sqrt{\x^\top \matA \x}$. We denote the symmetric matrix square root of a positive semi-definite matrix $\matA$ as $\matA^{1/2}$. The vectors $\textbf{1}_{n}$ and $\zeroVec_{n}$ denotes an $n$ dimensional vector of ones and zeros, respectively. The identify matrix is denoted as $\matI$. The operation of stacking two column vectors is expressed as $(\textbf{a}, \textbf{b}) = [\textbf{a}^\top, \textbf{b}^\top]^\top$. Finally, the function $\text{diag}(\cdot) : \setR^n \mapsto \setR^{n\times n}$, maps an $n$ dimensional vector into a diagonal matrix.
\section{Problem formulation}
\label{sec:problem_formulation}
Consider a robot operating in the world space, $\setW \subset \setR^3 $. The exact description of the robot body is given by its configuration $\q \in \setC$, where $\setC \subset \setR^{\NrDOF}$ is the configuration space. The robot dynamics has the following form
\begin{equation}
    \label{eq:dyn}
    \u = \matM(\q) \qddot + \matC(\q, \qdot) \qdot + \grav(\q).
\end{equation}
In the above, $\qdot \in \setCdot $, $\u \in \setU$, denote the velocity and control input, constrained to lie in their corresponding sets $\setCdot \subset \setRconf$ and $\setU \subset \setRconf$. Furthermore, $\matM: \setRconf \mapsto \setRconfByconf$, $\matC : \setRconf \times \setRconf \mapsto \setRconfByconf$, $\grav : \setRconf \mapsto \setRconf$, denote the mass matrix, coupling matrix (Coriolis and damping), and gravity vector functions, respectively. We assume incomplete knowledge of the model parameters \eqref{eq:dyn}, separating the model into nominal (known) and uncertain terms:
\begin{align*}
\matM(\q) &= \matMnom(\q) + \matMerr(\q), \\
\matC(\q, \qdot) &= \matCnom(\q, \qdot) + \matCerr(\q, \qdot), \\
\grav(\q) &= \gravNom(\q) + \gravErr(\q),
\end{align*}
where a null index denotes nominal terms, and error terms are indexed by a parameter vector $\paramME \in \paramMEset$, where $\paramMEset \subset \setR^{\DimParamME}$ is a known parameter set and $\paramME$ is the unknown model parameter. We assume that the sets $\setC$, $\setCdot$ and $\setU$ are hyperboxes, and that measurements of $\q$ and $\qdot$ are readily available. The robot is surrounded by $\NrObst$ obstacles $\setO = \bigcup_{i=1}^{\NrObst} \setO_i$, where $\setO_i \subset \setW$. The set of points covered by the robot body in configuration $\q$ is expressed as $\setFK(\q) \subset \setW$. We define the free space as $\setCf = \{\q \in \setC \; | \; \setFK(\q) \cap \setO = \emptyset \}$ and the obstacle region as $\setCo = \setC \setminus \setCf $. Our goal is to design a controller that steers the robot from a given start configuration $\qStart$ to a goal configuration $\qGoal$, while ensuring that the resulting trajectory satisfies constraints on velocity, torque, and is collision-free, i.e. $\q(t) \in \setCf$, $\qdot(t) \in \setCdot$, $\u(t) \in \setU$, $\forall t \geq 0$, for any considered model parameters $\paramME\in\Theta$.
\section{Robust convex MPC for manipulators}
\label{sec:man_robust_control}
In this section, we will derive a motion planning solution that comes with robustness guarantees, for the moment ignoring obstacle avoidance, which we address in the subsequent section.
\\\\
A key feature we aim for in our design is that the resulting MPC problem can be solved fast. Hence, we want a convex formulation, which requires convex constraints and linear prediction models. First, we utilize feedback linearziation to obtain a linear model and a suitable bound on the un-cancelled non-linearities (Section \ref{sec:fbd_lin}). 
To ensure robustness, we predict a scaled tube around a nominal prediction such that it contains the true (unknown) system response. 
We use an auxiliary controller and an ellipsoid tube to derive an expression of the tube dynamics (Section \ref{sec:aux_cont_tb_dyn}). 
Finally, we present our suggested convex robust MPC in Section~\ref{sec:rmpc:opt_prob}, including the theoretical analysis.
\subsection{Feedback linearization and model error}
\label{sec:fbd_lin}
In order to obtain a convex optimization problem, we have to obtain a linear prediction model. We use feedback linearization (FL~\cite{rob_mod_plan_cont}) to realize this, i.e., we cancel the (known) non-linear terms in the dynamics using feedback
\begin{equation}
    \label{eq:fdb_lin_map}
    \u = \funcFL(\q, \qdot, \acc) = \matMnom(\q) \acc + \matCnom(\q, \qdot) \qdot + \gravNom(\q),
\end{equation}
\BW{
where $\acc \in \setR^{\DimConf}$ is the desired acceleration. 
To ensure that the resulting torque $\u$ satisfies the torque constraints $\setU$, 
we introduce a convex acceleration constraint set $\setAcc \subset \setR^{\DimConf}$, which satisfies 
\begin{equation}
    \label{eq:func_fl_cond}
    \funcFL(\q, \qdot, \a)\in\setU \quad \forall\: \q \in \setC,\: \qdot\in \setCdot, \: \a \in \setAcc.
\end{equation}
}
Using the feedback~\eqref{eq:fdb_lin_map} in the dynamics~\eqref{eq:dyn} yields
\begin{align}
    \qddot =& \: \acc + \funcModelError(\q, \qdot, \acc), \label{eq:cont_dyanamic} \\
    \label{eq:model_error_exp}
    \funcModelError(\q, \qdot, \acc) =& \: \matMdelta(\q) \acc + \matCdelta(\q, \qdot) \qdot + \gravDelta(\q),
\end{align}
where \BW{$\qddot \in \setR^{\DimConf}$ is the acceleration. The functions} $\matMdelta : \setRconf \mapsto \setR^{\DimConf \times \DimConf}$, $\matCdelta : \setRconf \mapsto \setR^{\DimConf \times \DimConf}$ and $\gravDelta : \setRconf \mapsto \setRconf$ are errors parameterized by the uncertain model parameter $\paramME\in\Theta$, see\ifbool{arxiv}{~Appendix \ref{sec:model_err_der}}{~\cite[App. A]{arxiv_version_of_paper}} for the derivation. \BW{We note that~\eqref{eq:model_error_exp} does not introduce any approximations, it simply states that the uncertain robot dynamics~\eqref{eq:dyn} with the feedback~\eqref{eq:func_fl_cond} are equivalent to a double integrator subject to an additional nonlinear perturbation $\funcModelError$, that depends on the uncertain model parameters $\paramME$. Integrating the model~\eqref{eq:model_error_exp} over the sampling time $\timeStep$ with a (piece-wise) constant desired acceleration $\acc$ yields
\begin{align}
\label{eq:dynamics_discretized}
\x(k+1) &= \matA \x(k) + \matB(\acc(k)  + \funcModelError(\x(k), \acc(k))) \nonumber \\
&+\funcDiscError(\x(k), \acc(k)), 
\end{align}
where, $k \in \setN$ is the discrete time index, $\x = (\q, \qdot) \in \setX := \setC \times \setCdot\subset \setR^{\DimState}$ is the state with  $\DimState=2\cdot \DimConf$. To simplify the notation, we access the configuration and velocity from the state by $\q(\x)$ and $\qdot(\x)$, respectively. $\matA \in \setR^{\DimState \times \DimState}$ and $\matB \in \setR^{\DimState \times \DimConf}$ denote the dynamics and control matrices, respectively. The error term $\Delta_{\mathrm{disc}}$ is an additional discretization error accounting for the fact that $\Delta_\theta$ is not constant over the sampling time.}
\subsection{Auxiliary controller and tube dynamics}
\label{sec:aux_cont_tb_dyn}
To attenuate the effects of the model errors over a prediction horizon, we use an auxiliary controller, with the following control law
\begin{equation}
    \label{eq:aux_control_law}
    \acc = \accNom + \matK (\x - \xNom).
\end{equation}
The matrix $\matK \in \setR^{\DimConf \times \DimState}$, is referred to as the gain matrix, $\accNom \in \setR^{\DimConf}$ and $\xNom \in \setR^{\DimState}$ are the reference control and states, respectively. We compute the gain matrix that satisfies the following requirement
\begin{equation}
    \label{eq:contraction_req}
    \norm{(\matA+\matB\matK) \x }_{\matP} \le \rho \norm{\x}_{\matP},\quad \forall \x\in\setR^{\DimState},
\end{equation}
where $\rho \in (0, 1)$ is a contraction rate and $\matP \in \setPD^{\DimState}$ is referred to as the Lyapunov matrix. We obtain $\matP$ and $\matK$ satisfying~\eqref{eq:contraction_req} by solving a convex optimization problem offline, see\ifbool{arxiv}{~Appendix~\ref{sec:cvx_control_syn}}{~\cite[App. B]{arxiv_version_of_paper}} for details. The Lyapunov matrix $\matP$ allows us to form an ellipsoid in \BW{the state space which, together with its projection on to the configuration, velocity and control input space, are 
\begin{subequations}
\label{eq:ellipse}
\begin{align}
    \setE(\tubeSize) =& \{ \x \in \setR^{\DimState}  \:|\: \norm{\x}_{\matP} \le \tubeSize \}, \\
    \setE_{\q}(\tubeSize)=&\{\q(\x) \in \setR^{\DimConf}\:|\: \x\in\setE(\tubeSize)\}, \\
    \setE_{\v}(\tubeSize)=&\{\qdot(\x) \in \setR^{\DimConf}\:|\: \x\in\setE(\tubeSize)\},\\
    \setEcont(\tubeSize)=&\{\a=\matK\x\in\setR^{\DimConf} \:|\: \x\in\setE(\tubeSize)\}.
\end{align}
\end{subequations}}
The size of the ellipsoid is controlled by the scaling $\tubeSize \in \setR_+$.
\\\\
\BW{The following proposition provides a state and input dependent bound on the prediction error.
\begin{proposition}
\label{prop:delta_beta}
For all \( \x = (\q, \qdot) \in \setX \),  \( \acc \in \setAcc \) and \( \paramME \in \paramMEset \), we have
\begin{align}
\label{eq:model_error_upper_bound}
&\norm{\matB\funcModelError(\x, \acc) +\funcDiscError(\x, \acc)}_{\matP}
\\
 \le &\funcModelErrorBound(\x, \acc) :=  a\norm{\acc}+b\norm{\qdot(\x)}+c. \nonumber
\end{align}
with $\funcModelError,\funcDiscError$ from~\eqref{eq:dynamics_discretized} and $a,b,c$ according to~\eqref{eq:me_a}-\eqref{eq:me_c}.
\end{proposition}
\begin{proof}
Using the triangle inequality and the property $|| \matA \x|| \le || \matA || || \x||$ \cite{horn_john_matrix} on the individual terms of expression \eqref{eq:model_error_exp}, we end up with the bound in \eqref{eq:model_error_upper_bound}, where $a, b, c \in \setR_+$ are computed according to
\begin{align}
    a &= \underset{\paramME \in \paramMEset, \q \in \setC }{\text{max}} \: ||\matP^{1/2} \matB \matMdelta(\q)||, \label{eq:me_a} \\
    b &= \underset{\paramME \in \paramMEset, (\q, \qdot) \in \setX}{\text{max}} \: ||\matP^{1/2} \matB \matCdelta(\q, \qdot) ||, \label{eq:me_b}\\
    c &= \underset{\paramME \in \paramMEset, (\q, \qdot) \in \setX, \acc \in \setAcc }{\text{max}}  \: ||\matP^{1/2} (\matB \gravDelta(\q) + \funcDiscError(\q, \qdot, \a))||.
    \label{eq:me_c}
\end{align}
The closed-form expressions of the functions introduced above are presented in\ifbool{arxiv}{~Appendix \ref{sec:model_err_der}}{~\cite[App. A]{arxiv_version_of_paper}}.
\end{proof}
}
This bound highlights that the error depends significantly on the velocity and acceleration, crucial knowledge that the controller will leverage for safe and efficient planning. 
\\\\
Having obtained a state and input dependent bound, we now focus on how to adjust the scaling such that the uncertain system~\eqref{eq:dynamics_discretized} remains inside the ellipsoid~\eqref{eq:ellipse} around the nominal prediction\BW{, which we present in the following proposition.
\begin{proposition}
\label{prop:tube_dynamics}
For any $\norm{\x-\xNom}_\matP\leq \delta$, $\paramME\in\Theta$, we have that $\norm{\x_+-\xNom_+}_\matP\leq \delta_+$ with $\x_+$ according to~\eqref{eq:dynamics_discretized}, $\xNom_+=\matA\xNom+\matB\accNom$, $\acc=\accNom+\matK(\x-\xNom)$, $\matV= [\textbf{0}_{\DimConf \times \DimConf}, \textbf{I}_{\DimConf \times \DimConf}]$ and 
\begin{align}
\tubeSize_+ = & \: (\rho + L_\beta) \tubeSize + \beta(\xNom,\accNom), \label{eq:delta_dynamics}\\
L_\beta = & \: a\norm{\matK\matP^{-1/2}}+b\norm{\matV \matP^{-1/2}}. \label{eq:L_beta}
\end{align}
\end{proposition}
\ifbool{arxiv}{The proof can be found in Appendix~\ref{sec:proof_cl_error_prop}.}{The bound follows from \eqref{eq:model_error_upper_bound} and \eqref{eq:contraction_req}, see \cite[App. C]{arxiv_version_of_paper} for details.}
}In the following, we abbreviate $\tubeGrowth :=\rho+L_\beta$ and we assume that $\tubeGrowth <1$. 
Given $\rho<1$, this holds if the parametric uncertainty $\Theta$ is sufficiently small. Finally, for a steady state with zero velocity/acceleration, the tube size $\tubeSize$ converges to a steady state tube size
\begin{align}
\label{eq:delta_f}
\BW{\tubeSize_f:= c/(1-\tilde{\rho})}.
\end{align}
\BW{We assume that this steady-state tube size is smaller than the velocity and acceleration constraints:}
\begin{align}
\label{eq:origin_in_tightened_constraints}
\BW{\zeroVec_{\DimConf} \in\setAcc\ominus\setE_{\acc}(\delta_f+\epsilon),
\quad \zeroVec_{\DimConf} \in \setV \ominus \setE_{\v}(\delta_f+\epsilon).}
\end{align}
\subsection{Robust convex MPC problem}
\label{sec:rmpc:opt_prob}
With a convex expression for the model error propagation and convex input constraints established, we can now formulate our resulting robust MPC problem as follows:
\begin{subequations}
\label{eq:mpc}
\begin{align}
\underset{\matXnom, \matAccNom, \bm{\tubeSize} }{\operatorname{min}} 
    \quad 
    & 
    \sum_{i=0}^{\NrHorizon-1}\left[\norm{\xNom_i-\xNom_{\NrHorizon}}_{\matQ}^2+\norm{\accNom_i}_{\matR}^2\right]  +\norm{\xNom_{\NrHorizon}-\xGoal}_{\matQ_e}^2 \label{eq:htmpc:obj} \\
    \textrm{s.t.} \quad & \norm{\xNom_0 - \x(k)}_{\matP} \le \tubeSize_0 , &  \label{eq:htmpc:start} \\
        & \xNom_{i+1} = \matA \xNom_i + \matB \accNom_i,  \label{eq:htmpc:dyn} \\
        & \qdot(\xNom_{\NrHorizon}) = \mathbf{0}_{\DimConf},\tubeSize_\NrHorizon\geq \tubeSize_f,\xNom_\NrHorizon\in \setX\ominus\setE(\delta_\NrHorizon+\epsilon) \label{eq:htmpc:ss} \\
        & \tubeSize_{i+1} \ge \tubeGrowth \tubeSize_i + \funcModelErrorBound(\xNom_i, \accNom_i), \label{eq:htmpc:tube_dyn} \\
        & \xNom_i \in \setX \ominus \setE(\delta_i), & \label{eq:htmpc:limits_states} \\
        & \accNom_i \in \setAcc \ominus \setEcont(\delta_i), \; i \in \setN_{0:\NrHorizon-1}, & \label{eq:htmpc:limits_controls}
\end{align}
\end{subequations}
using the current state $\x(k)$, the goal state $\xGoal$, user chosen positive definite weight matrices $\matQ,\matQ_e,\matR \in \setPD$, and a horizon $\NrHorizon\geq 2$. \BW{The user-chosen offset $\epsilon>0$ ensures that the system cannot get stuck at a steady-state close to the constraints~\cite{krupa2024model}.} The decision variables are the nominal trajectory, $\matXnom = [\xNom_0, \hdots, \xNom_{\NrHorizon}] \in \setR^{\DimState \times (\NrHorizon+1)}$, the nominal control inputs $\matAccNom = [\accNom_0, \hdots, \accNom_{\NrHorizon-1}]\in \setR^{\DimConf \times \NrHorizon}$ and the tube size $\bm{\tubeSize} = [\tubeSize_0, \hdots, \tubeSize_{\NrHorizon}]^\top\in \setR^{\NrHorizon+1}$. In closed-loop operation, we solve~\eqref{eq:mpc} with the current measured state $\x(k)$, and then apply $\a(k)=\a^\star_0$, the first optimized input, to the system. Note that Problem~\eqref{eq:mpc} is a convex second-order cone problem, which can be efficiently solved. 
\\\\
The objective, \eqref{eq:htmpc:obj}, consists of \BW{two} parts. The first part drives the predicted trajectory to the steady-state $\xNom_\NrHorizon$, which acts as an artificial reference (cf.~\cite{krupa2024model}), while keeping the controls small. A second term pushes this artificial reference to the desired goal $\xGoal$. 
\\\\
The nominal trajectory starts with a tube around the measured state $\x(k)$~\eqref{eq:htmpc:start} and evolves according to the linear dynamics~\eqref{eq:htmpc:dyn}. The final state is a steady-state with zero velocity~\eqref{eq:htmpc:ss}. The  tube~\eqref{eq:htmpc:tube_dyn} is scaled according to Proposition~\ref{prop:tube_dynamics}. Finally, the state and control limits are tightened by the tube size in \eqref{eq:htmpc:limits_states} and \eqref{eq:htmpc:limits_controls}, respectively. The operator $\ominus$ denotes the Pontryagin differences, see\ifbool{arxiv}{~Appendix~\ref{sec:derive_opt_prob}}{~\cite[App. B2]{arxiv_version_of_paper}} for implementation details. The following theorem summarizes the closed-loop properties.
\begin{theorem}
\label{thm:robust_MPC}
Consider the nonlinear system~\eqref{eq:dynamics_discretized} with $\paramME\in\Theta$ and \BW{$\xGoal\in\setX \ominus \setE(\epsilon+\tubeSize_f)$}. Suppose that Problem~\eqref{eq:mpc} is feasible at $k=0$, then the closed-loop system satisfies:
\begin{itemize}
    \item Recursive feasibility: 
Problem~\eqref{eq:mpc} is feasible $\forall k\in\mathbb{N}$;
    \item Constraint satisfaction: $\x(k)\in\setX$, $\acc(k)\in\setAcc$, $\forall k\in\mathbb{N}$;
    \item \BW{Convergence:  $\underset{k\rightarrow\infty}{\lim\sup} ~\{\x(k)-\xGoal\}\subseteq\setE(\delta_f)$.}
\end{itemize}
\end{theorem}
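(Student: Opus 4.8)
The plan is to follow the standard tube-MPC recipe: build a feasible candidate solution at time $k+1$ from the optimal solution at time $k$, use it to establish recursive feasibility and constraint satisfaction, then construct a Lyapunov-like argument around the optimal value function to obtain convergence.

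\textbf{Recursive feasibility.} Suppose $(\matXnom^\star,\matAccNom^\star,\bm{\tubeSize}^\star)$ solves~\eqref{eq:mpc} at time $k$. After applying $\acc(k)=\accNom_0^\star+\matK(\x(k)-\xNom_0^\star)$, Proposition~\ref{prop:tube_dynamics} together with constraint~\eqref{eq:htmpc:start} guarantees $\norm{\x(k+1)-\xNom_1^\star}_\matP\le\tilde\rho\,\tubeSize_0^\star+d\,\beta(\xNom_0^\star,\accNom_0^\star)\le\tubeSize_1^\star$, where the last inequality is exactly~\eqref{eq:htmpc:tube_dyn}. Hence the shifted candidate with $\xNom_i^+:=\xNom_{i+1}^\star$, $\accNom_i^+:=\accNom_{i+1}^\star$ for $i\le \NrHorizon-2$, and $\tubeSize_i^+:=\tubeSize_{i+1}^\star$, satisfies the new initial constraint~\eqref{eq:htmpc:start}. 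For the terminal append, I would reuse the previous steady state: set $\xNom_{\NrHorizon-1}^+=\xNom_{\NrHorizon}^+=\xNom_\NrHorizon^\star$ and $\accNom_{\NrHorizon-1}^+=\mathbf{0}$ (this is a valid control since $\xNom_\NrHorizon^\star$ has zero velocity, so $\matA\xNom_\NrHorizon^\star=\xNom_\NrHorizon^\star$). For the terminal tube, choose $\tubeSize_{\NrHorizon-1}^+=\tubeSize_\NrHorizon^\star$ and $\tubeSize_\NrHorizon^+=\tilde\rho\,\tubeSize_\NrHorizon^\star+d\,\beta(\xNom_\NrHorizon^\star,\mathbf{0})=\tilde\rho\,\tubeSize_\NrHorizon^\star+d c$; one must check $\tubeSize_\NrHorizon^+\ge\tubeSize_f$, which holds because $\tubeSize_f=dc/(1-\tilde\rho)$ is the fixed point of the map $s\mapsto\tilde\rho s+dc$ and this map is monotone with $\tubeSize_\NrHorizon^\star\ge\tubeSize_f$ by~\eqref{eq:htmpc:ss}; it also follows that $\tubeSize_\NrHorizon^+\le\tubeSize_\NrHorizon^\star$, so the tightened terminal set constraint in~\eqref{eq:htmpc:ss} still holds at the new terminal state $\xNom_\NrHorizon^\star$ since $\setX\ominus\setE(\tubeSize_\NrHorizon^\star+\epsilon)\subseteq\setX\ominus\setE(\tubeSize_\NrHorizon^++\epsilon)$. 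All tightened state/input constraints~\eqref{eq:htmpc:limits_states}--\eqref{eq:htmpc:limits_controls} for the shifted indices are inherited, and the two new ones (index $\NrHorizon-1$) hold because $\tubeSize_{\NrHorizon-1}^+=\tubeSize_\NrHorizon^\star\le\tubeSize_\NrHorizon^\star+\epsilon$ and $\xNom_\NrHorizon^\star$ satisfies the terminal inclusion, with $\mathbf{0}\in\setAcc\ominus\setEcont(\tubeSize_\NrHorizon^\star)$ provided $\mathbf{0}$ lies suitably interior (this is where I would invoke that $\setAcc$ contains a ball around the origin by construction of Section~\ref{sec:cvx_feas_acc_cont}, or simply note $\tubeSize_\NrHorizon^\star=\tubeSize_f$ is small). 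This gives a feasible point at $k+1$.

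\textbf{Constraint satisfaction.} Feasibility at every $k$ yields $\norm{\x(k)-\xNom_0^\star(k)}_\matP\le\tubeSize_0^\star(k)$, hence $\x(k)\in\xNom_0^\star(k)\oplus\setE(\tubeSize_0^\star(k))\subseteq\setX$ by~\eqref{eq:htmpc:limits_states}, and similarly $\acc(k)=\accNom_0^\star(k)+\matK(\x(k)-\xNom_0^\star(k))\in\accNom_0^\star(k)\oplus\setEcont(\tubeSize_0^\star(k))\subseteq\setAcc$ by~\eqref{eq:htmpc:limits_controls}.

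\textbf{Convergence.} Let $\funcValue_k$ be the optimal value of~\eqref{eq:mpc} at time $k$. Evaluating the cost of the shifted candidate above and subtracting gives the standard descent bound
\begin{equation}
\funcValue_{k+1}\le \funcValue_k-\norm{\xNom_0^\star-\xNom_\NrHorizon^\star}_\matQ^2-\norm{\accNom_0^\star}_\matR^2-\tubeSize_0^\star+\Delta_{\text{term}},
\nonumber
\end{equation}
where $\Delta_{\text{term}}$ collects the terminal-cost difference; by the fixed-point property of $\tubeSize_f$ and the telescoping of the $1/(1-\tilde\rho)$-weighted terminal tube term, $\Delta_{\text{term}}$ vanishes when $\xGoal$ is reachable as a steady state, i.e. the artificial reference $\xNom_\NrHorizon$ can equal $\xGoal$, which is guaranteed by the hypothesis $\xGoal\in\setX\ominus\setE(\epsilon+\tubeSize_f)$. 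Summing over $k$ shows $\sum_k(\norm{\xNom_0^\star-\xNom_\NrHorizon^\star}_\matQ^2+\norm{\accNom_0^\star}_\matR^2)<\infty$, so $\xNom_0^\star(k)-\xNom_\NrHorizon^\star(k)\to0$ and $\accNom_0^\star(k)\to0$; combined with the dynamics this forces the nominal trajectory to a steady state, and a separate argument that $\xNom_\NrHorizon^\star(k)\to\xGoal$ (using that the second cost term $\norm{\xNom_\NrHorizon-\xGoal}_{\matQ_e}^2$ is also driven down, or an LaSalle-type invariance argument on the limit set) closes it. Finally $\tubeSize_0^\star(k)\to\tubeSize_f$ and $\norm{\x(k)-\xNom_0^\star(k)}_\matP\to$ something bounded by $\tubeSize_f$ — to get exact convergence $\norm{\x(k)-\xGoal}\to0$ rather than convergence to a $\tubeSize_f$-ball, I would use that at a steady state with $\accNom=\mathbf{0}$ and zero velocity the model error is bounded by $c$ alone, and more carefully that the contraction $\tilde\rho$ drives the \emph{actual} deviation to zero if $\beta\to0$ at the goal; here one exploits that $\beta(\xGoal,\mathbf{0})=c$ need not be zero, so strictly one argues convergence of $\x(k)$ into the minimal robust invariant set, and equality $\lim\|\x(k)-\xGoal\|=0$ requires either $c=0$ or a refined argument that the optimizer drives $\tubeSize_0\to0$. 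The main obstacle is precisely this last point: reconciling the claimed exact convergence with a nonzero steady-state tube $\tubeSize_f$. I expect the intended resolution is that the theorem's convergence claim is about the nominal state and that under the stated hypothesis the tube contracts so that $\x(k)$ and $\xNom_0(k)$ coincide in the limit; pinning down that coupling cleanly is where the real work lies, whereas recursive feasibility and constraint satisfaction are routine.
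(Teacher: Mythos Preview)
Your recursive-feasibility and constraint-satisfaction arguments match the paper's essentially line for line (shifted candidate, appended steady state, Proposition~\ref{prop:tube_dynamics} for the new initial constraint, monotonicity of $s\mapsto\tilde\rho s+dc$ for $\delta_\NrHorizon^+\in[\delta_f,\delta_\NrHorizon^\star]$). The convergence part, however, has two genuine gaps.

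First, the cancellation of $\Delta_{\text{term}}$ has nothing to do with $\xGoal$ being a feasible steady state. In the candidate the artificial reference is unchanged, $\xNom_\NrHorizon^+=\xNom_\NrHorizon^\star$, so the $\matQ_e$-term contributes zero regardless. What actually cancels is the appended tube cost: the new stage term $\delta_{\NrHorizon-1}^+=\delta_\NrHorizon^\star$ against the terminal increment $\tfrac{1}{1-\tilde\rho}(\delta_\NrHorizon^+-\delta_\NrHorizon^\star)$, and this is exactly why the terminal weight is $\tfrac{1}{1-\tilde\rho}$. Once this is set up correctly, the decrease bound contains $-\delta_0^\star$, so the telescoping sum gives $\sum_k\delta_0^\star(k)<\infty$ and hence $\delta_0^\star(k)\to 0$. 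That immediately resolves what you call ``the main obstacle'': since $\|\x(k)-\xNom_0^\star(k)\|_\matP\le\delta_0^\star(k)\to 0$, the true state and the nominal state coincide in the limit, so exact convergence of $\x(k)$ follows from convergence of $\xNom_0^\star(k)$. No separate argument about $c=0$ or a minimal RPI set is needed.

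Second, the step you defer (``a separate argument that $\xNom_\NrHorizon^\star(k)\to\xGoal$'') is precisely where the paper invokes the tracking-MPC machinery: it cites \cite[Lemma~1]{kohler2024analysis}, which uses controllability of $(\matA,\matB)$, the strictly convex cost, the convex steady-state manifold, and the $\epsilon$-interior condition on steady states to show that any optimal stationary point with $\xNom_\NrHorizon^\star\neq\xGoal$ can be improved by nudging the artificial reference toward $\xGoal$. This is the one non-routine ingredient in Part~III, and it is exactly the piece your proposal leaves as a placeholder.
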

\begin{proof}
The proof merges concepts from robust MPC using homothetic tube~\cite{sasfi2023robust} and MPC for tracking~\cite{krupa2024model,kohler2024analysis}.\\
\textbf{Part I: }
Given the optimal solution to problem~\eqref{eq:mpc} at time $k$, we consider the following feasible candidate solution
$\matXnom = [\xNom^\star_1, \hdots, \xNom_{\NrHorizon}^\star,\xNom_{\NrHorizon}^\star]$, $\matAccNom = [\accNom_1^\star, \hdots, \accNom_{\NrHorizon-1}^\star, \textbf{0}_{\DimConf}]$,  $\bm{\tubeSize} = [\tubeSize_1^\star, \hdots, \tubeSize_{\NrHorizon}^\star, \tubeSize_{\NrHorizon}]$. 
Here,  $\delta_{\NrHorizon}=\delta^\star_{\NrHorizon}\tilde{\rho}+ c$ according to~\eqref{eq:htmpc:tube_dyn} with $\beta(\xNom_{H},0)=c$, given that $\xNom_H^\star$ is a steady-state using~\eqref{eq:htmpc:ss}.  
Furthermore, $\tilde{\rho}<1$ and $\delta_H^\star\geq \delta_f$~\eqref{eq:delta_f} ensure that $\tubeSize_\NrHorizon\leq\tubeSize_\NrHorizon^\star$ and thus this appended solution also satisfies the tightened constraints~\eqref{eq:htmpc:tube_dyn} at $i=\NrHorizon$.
Lastly, Proposition~\ref{prop:tube_dynamics} ensures that this candidate solution also satisfies the initial state constraint~\eqref{eq:htmpc:start} with the new measured state $\x(k+1)$ for any $\paramME\in\Theta$.\\
\textbf{Part II: }Closed-loop constraint satisfaction follows from the feasibility of Problem~\eqref{eq:mpc}, the tightened constraints~\eqref{eq:htmpc:limits_states}, \eqref{eq:htmpc:limits_controls}, and the fact that $\x(k)-\xNom_0^\star\in\setE(\delta_0^\star)$, $\acc(k)-\accNom_0^\star\in\setEcont(\delta_0^\star)$ using the definition of the ellipsoids~\eqref{eq:ellipse} and the initial state constraint~\eqref{eq:htmpc:start}.\\
\textbf{Part III: }Let us denote the optimal cost of Problem~\eqref{eq:mpc} at time $k$ by $\mathcal{J}^\star(k)$. 
The feasible candidate solution implies
\begin{align*}
&\mathcal{J}^\star(k+1)-\mathcal{J}^\star(k) 
\leq -\norm{\xNom_0^\star-\xNom_{\NrHorizon}^\star}_{\matQ}^2-   \norm{\accNom_0^\star}_{\matR}^2.
\end{align*}
Given $\mathcal{J}^\star$ non-negative and $\mathcal{J}^\star_0$ finite, using this condition in a telescopic sum ensures that, as $k\rightarrow\infty$, $\xNom^\star_0$ converges to a steady-state. 
Lastly, to ensure that this steady-state corresponds to $\xGoal$, we use~\cite[Lemma~1]{kohler2024analysis}, to ensure the existence of a uniform constant $d>0$, such that
\begin{align*}
\norm{\xNom_0^\star-\xNom_\NrHorizon^\star}^2_\matQ\leq d \norm{\xNom_\NrHorizon-\xGoal}_{\matQ_e}^2.
\end{align*}
This result applies given the convex steady-state manifold, the strictly convex quadratic cost, the fact that steady-states are the interior of the (tightened) constraints with $\epsilon>0$, 
and controllability of $(\matA,\matB)$ with $\NrHorizon\geq 2$. 
\BW{Thus, $\xNom^\star_0$ converges to $\xGoal$ and $\x(k)$ converges to $\xGoal\oplus\setE(\delta_f)$.}
\end{proof}
Notably, Theorem~\ref{thm:robust_MPC} relies only on convex optimization, provides robustness guarantees for uncertain nonlinear manipulators, accounts for velocity/acceleration dependence of the model error, and provides a larger region of attraction.\footnote{%
\BW{Given~\eqref{eq:origin_in_tightened_constraints}, any steady-state $\x(0)\in\setX\ominus\setE(\epsilon+\delta_f)$ is a feasible initial condition of the MPC~\eqref{eq:mpc}. 
Furthermore, in case the uncertainty in gravity and the discretization error is small, we get $c\approx 0, \delta_f\approx 0$. With $\epsilon\approx 0$ this implies that any steady-state in the constraints is a feasible initial condition.}
}
\newpage
\section{Real-time MPC with convex obstacle avoidance constraints}
\label{sec:obs_avoid}
\begin{figure*}[t!]
\begin{tikzpicture}
\node[inner sep=0pt] (corr) at (0,0) {
\includegraphics[width=.25\textwidth]{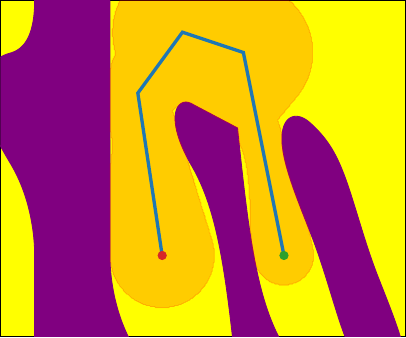}
};
\node[inner sep=0pt] at (4.5, 0) {
\includegraphics[width=.25\textwidth]{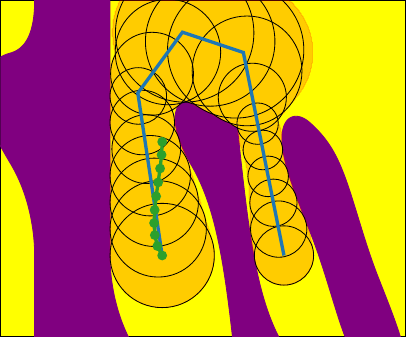}
};
\node[inner sep=0pt] at (9, 0) {
\includegraphics[width=.25\textwidth]{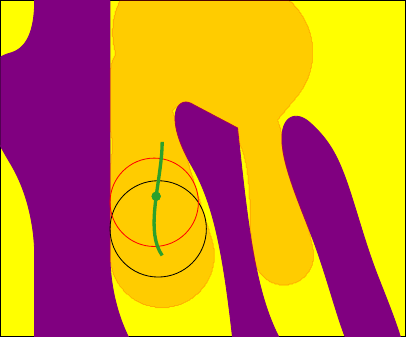}
};
\node[inner sep=0pt] at (13.5, 0) {
\includegraphics[width=.25\textwidth]{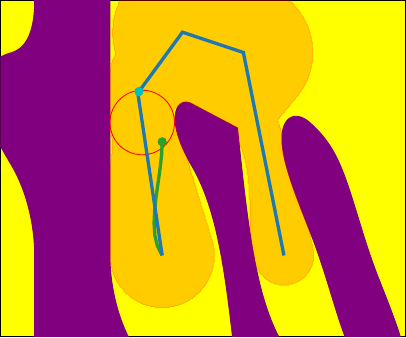}
};
\end{tikzpicture}
\caption{All plots are illustrated in the configuration space. The yellow and purple symbolize the free and the obstacle regions, respectively. \textbf{Left:} From a collision-free path (blue line), a corridor is defined by the SCDF (orange region). \textbf{Mid left:} The corridor is discretized, resulting in a sequence of balls (black circles). The idea is to select a set of collision-free balls and a temporary goal state, such that the predicted trajectory makes progress to the goal and stays within the corridor. A feasible trajectory from the previous iteration is used as help (dotted green curve). \textbf{Mid right:} The first step is to compute the balls. For each state along a given feasible trajectory, a ball is allocated. An example state is highlighted (green point). The balls that contain the state are selected (black and red circles). Then the ball with the largest margin to the point is selected (red circle). \textbf{Right:} Illustration of how the virtual goal is computed. The ball corresponding to the last point along the trajectory is selected (red circle). Then, a virtual goal (cyan point) is computed by finding a point along the path, contained inside the ball, with the furthest progress along the path.}
\label{fig:corr}
\end{figure*}
\BW{In this section, we add the two missing pieces of the approach presented in Section~\ref{sec:man_robust_control}: (i) How to enforce obstacle avoidance? (ii) How to execute the optimization in parallel to ensure real-time applicability? Furthermore, we present an algorithm that guides the robot through a collision-free corridor, only requiring the solution to a convex MPC problem that is solved in parallel during execution. Finally, we provide a theoretical analysis, showing that this approach robustly ensures safe operation.
}
\subsection{Obstacle avoidance through SCDF}
\label{sec:obs_avoid:scdf}
The SCDF is defined as 
\begin{equation}
\label{eq:SCDF}
\funcSD(\q) = 
\begin{cases}
-\min_{\qc \in \setCoBnd} \: \|\q - \qc\| & \text{if } \q \in \setCo, \\
\hphantom{-} \min_{\qc \in \setCoBnd} \: \|\q - \qc\| & \text{otherwise},
\end{cases}
\end{equation}
which is the distance to the boundary of the obstacle region $\setCoBnd$. 
Given a collision-free point $\bubbleCenter\in\setCf$, the SCDF allows us to define a collision-free region as 
\begin{equation}
\label{eq:cball}
\setB(\bubbleCenter) = \: \{\q \in \setRconf \: | \: \norm{\q - \bubbleCenter } \le \funcSD(\bubbleCenter) \: \} \subseteq \setCf,
\end{equation}
which is parametrized by the tuple $(\bubbleCenter, r)$. The region is an Euclidean norm ball in the configuration space. Obtaining an analytical expression for the SCDF is non-trivial, which is why we resort to approximate methods\cite{pbrm, cspf}. To adapt the proposed MPC such that it guarantees obstacle avoidance, we simply add the following constraints
\begin{gather}
\begin{aligned}
    \label{eq:ball_constraint}
    \q(\xNom_i) &\in \setB_i \ominus \setBQ(\delta_i), \; i \in \setN_{0:\NrHorizon-1},\\
    \q(\xNom_\NrHorizon) &\in \setB_\NrHorizon \ominus \setBQ(\delta_\NrHorizon+\epsilon),
\end{aligned}
\end{gather}
to the MPC~problem~in~\eqref{eq:mpc}. The computation of the constraint tightening is given in\ifbool{arxiv}{~Appendix~\ref{sec:tube_in_ball_const}}{~\cite[App. D]{arxiv_version_of_paper}}. In the above, each state in the nominal trajectory is constrained to lie within an allocated ball $\setB_i$. How these are computed is presented in Section \ref{sec:corr_planning}.
\BW{
\subsection{Parallel MPC formulation}
\label{sec:parallel_execution}
To fulfill the real-time constraints, we solve the MPC problem \eqref{eq:mpc} in parallel while running the auxiliary controller~\eqref{eq:aux_control_law} for $\NrAuxSteps > 1$ steps. To enable this parallelism, we need to use a forward projection of the initial condition~\cite{findeisen2004computational}, since the measured state $\x(k)$ in \eqref{eq:htmpc:start} is not yet available. Thus, we replace the measured state in the constraint~\eqref{eq:htmpc:start} with a tube prediction which will contain the future state:
\begin{equation}
\label{eq:real_time_initial_tube}
\norm{\xNom_0 - \hat{\xNom}_{\NrAuxSteps}}_{\matP} \le \tubeSize_0 - \hat{\tubeSize}_{\NrAuxSteps}.
\end{equation}
Here, $\hat{\xNom}_{\NrAuxSteps} \in \setX$ and $\hat{\tubeSize}_{\NrAuxSteps} \in \setR_+$ are the nominal state and tube size predicted at time $k-\NrAuxSteps$. 
In particular, $\hat{\xNom}_{\NrAuxSteps}$ corresponds to the prediction $\xNom^\star_{\NrAuxSteps}$ made $\NrAuxSteps$ steps ago.
Similarly, the tube is predicted using~\eqref{eq:htmpc:tube_dyn}, but with the initial condition based on the most up-to-date information at time $k-\NrAuxSteps$:}
\begin{equation}
\label{eq:predict_tube_size}
\BW{\hat{\tubeSize}_0 = \norm{\hat{\xNom}_{0} - \x(k-\NrAuxSteps)}_{\matP}.}
\end{equation}
\subsection{Robust corridor planning}
\label{sec:corr_planning}
Next, we propose an algorithm that ensures that the robot reaches a given goal configuration, $\qGoal$, without collisions. The high-level idea is to first generate a collision-free region that connects the current configuration and the goal configuration. We refer to this region as a corridor, which is produced using the SCDF around a collision-free path and is illustrated in the left part of Figure \ref{fig:corr}. Then, \BW{at each iteration,} we constrain the predicted MPC trajectory to stay within this corridor and pull it towards a temporary virtual goal state, $\xGoalVirtual$, such that we make progress to the global goal state.
\\\\
Our approach is sketched in Algorithm~\ref{alg:rbmp}. Before any planning can take place, we perform the offline tasks already described in the previous sections. That is, we compute the acceleration constraints $\setAcc$, line \ref{alg:rbmp:off_convexify}, compute the auxiliary controller, line \ref{alg:rbmp:off_solve_cvxp}\BW{, and compute the model error constants, line \ref{alg:rbmp:off_comp_params}.}
\begin{algorithm}[t!]
\centering
\caption{Our corridor planning approach to control the manipulator to a goal state while avoiding obstacles.}
\label{alg:rbmp}
\begin{algorithmic}[1]
	\State learn SCDF \Comment{Offline}
    \BW{
    \State $\setAcc \gets$ compute acceleration constraint \eqref{eq:func_fl_cond}\label{alg:rbmp:off_convexify}
    \State $\matP, \matK \gets$ solve optimization problem, see\ifbool{arxiv}{~Appendix \ref{sec:cvx_control_syn}}
    {~\cite[App. B]{arxiv_version_of_paper}}\label{alg:rbmp:off_solve_cvxp}
    \State $a, b, c \gets$ compute constants \eqref{eq:me_a}-\eqref{eq:me_c}\label{alg:rbmp:off_comp_params}
    }
    \Statex\hrulefill
    \State $\qGoal \gets$ get desired goal \Comment{Corridor planning}  \label{alg:rbmp:query}
    \State $k$ $\gets$ 0,  \BW{$\x(0)=(\qStart, \zeroVec_{\DimConf})$ $\gets $ initial condition at rest} \label{alg:rbmp:assign_x0}
    \State $\funcPath \gets $ run path planner \label{alg:rbmp:run_pp}
    \State $\tuplePath$, $\tupleCorr$  $\gets$ discretize $\funcPath$, compute SCDF and verify \eqref{eq:condition_corridor}\label{alg:rbmp:disc}
    \State $\matXnom$ $\gets$ $[\x(0)]_{i=0}^{\NrHorizon}\BW{, \matAccNom \gets [\zeroVec_{\DimConf}]_{i=1}^{\NrHorizon}, \xNomPred \gets \x(0),  \tubeSizePred \gets 0}$ \label{alg:rbmp:feasb_start}
    \Statex \hrulefill
    \State $\tupleBalls$ $\gets$ assign balls with $\matXnom$ \eqref{eq:assign_ball_rule} \Comment{Real-time control}\label{alg:rbmp:assign_ball}
    \State $\xGoalVirtual$ $\gets$ select virtual goal \eqref{eq:assign_vgoal} with $\tuplePath$ and $\setB_\NrHorizon$ \label{alg:rbmp:assign_vg}
    \State $\matXnom, \matAccNom$ $\gets$ \BW{solve in parallel} \eqref{eq:mpc}, \eqref{eq:ball_constraint}, \BW{\eqref{eq:real_time_initial_tube}}, with $\BW{\xNomPred, \tubeSizePred}, \xGoalVirtual, \tupleBalls$ \label{alg:rbmp:mpc}
    \BW{\State $\xNomPred$, $\tubeSizePred$ $\gets$ from $\x(k)$, predict future tube, Section \ref{sec:parallel_execution} }\label{alg:rbmp:predict_tube}
    \For{$i \in \setN_{0:\NrAuxSteps-1} $} \label{alg:rbmp:aux_start} \Comment{aux. control loop}
        \State $\accNom, \xNom \gets$ get $i$-th reference state and control
        \State apply $\u=\funcFL(\q, \qdot, \a)$, with $\a$ from \eqref{eq:aux_control_law} duration $\timeStep$ \label{alg:rbmp:apply}
        \State $\x(k+1) \gets $ measure next state 
        \State $k \gets  k +1$ 
    \EndFor \label{alg:rbmp:aux_end}
    \State $\matXnom$ $\gets$ shift $\NrAuxSteps$ times and append $[\xNom_\NrHorizon]_{i=1}^{\NrAuxSteps}$ \label{alg:rbmp:shift_repeat}
    \State Go back to \ref{alg:rbmp:assign_ball} \label{alg:rbmp:end}
\end{algorithmic}
\end{algorithm}
\BW{During online operation, we start at a steady-state configuration $\qStart$, receive a goal configuration $\qGoal$ and first execute the corridor planning, lines \ref{alg:rbmp:query}-\ref{alg:rbmp:assign_x0}.}
Then, we plan a high-level collision-free path, $\funcPath(s):~[0, 1]~\mapsto~\setCf$, to the goal, e.g. by using a sampling-based planner, line~\ref{alg:rbmp:run_pp}. Next, in line~\ref{alg:rbmp:disc}, we discretize the corresponding path, resulting in the sequence $\tuplePath =(\c_1, \hdots, \c_M) \in \setCf^{M}$, and precompute the collision-free balls with the SCDF according to \eqref{eq:cball}, resulting in a discretized corridor $\tupleCorr=((\c_1, r_1), \hdots, (\c_M, r_M))$. \BW{
In order to guarantee convergence, the sequence of balls needs to be sufficiently overlapping, which is captured by the following condition
\begin{equation}
\label{eq:condition_corridor}
 \c_{i+1} \in \setB(\c_{i}) \ominus \setE_\q(2\epsilon + \tubeSize_f).
\end{equation}
We verify~\eqref{eq:condition_corridor} for all neighbouring balls in the corridor and use a finer discretization for pairs where it is not yet fulfilled. 
Then, we initialize a feasible trajectory, control inputs, and a predicted tube around the current state, line~\ref{alg:rbmp:feasb_start}}. Next, we enter the real-time control loop, line \ref{alg:rbmp:assign_ball}-\ref{alg:rbmp:end}, which is illustrated in the mid-left part of Figure~\ref{fig:corr}. From a feasible trajectory we loop over all its states $\xNom_i \in \matXnom $ and assign a corresponding ball to it, line \ref{alg:rbmp:assign_ball}. This is done according to
\begin{equation}
    \label{eq:assign_ball_rule}
    (\c_i,r_i) = \arg\max_{(\c_j, r_j) \in \tupleCorr } \: \{ 
    r_j - \norm{\q(\xNom_i) - \c_j} \},
\end{equation}
which returns the ball $\setB_i=\setB(\c_i)$ that contains $\q(\xNom_i)$ with the largest margin. The assignment rule is conceptualized in the mid right part of Figure \ref{fig:corr}. The process is repeated for $i\in \setN_{0:\NrHorizon}$, resulting in the sequence $\tupleBalls = (\setB_0, \hdots, \setB_\NrHorizon)$.
\\\\
Next, we compute a virtual goal state $\xGoalVirtual$, line \ref{alg:rbmp:assign_vg}, which serves the purpose of pulling the trajectory in a direction that makes progress to the global goal state $\xGoal$. This is done by selecting the configuration that has made the most progress along the path and is contained in the last ball, i.e. according to
\begin{equation}
    \label{eq:assign_vgoal}
    \qGoalVirtual
    =
    \arg\max_{\c_i \in \tuplePath} \:  \{ i\mid
    \c_i \in \setB_{\NrHorizon} \ominus \setE_\q(\epsilon + \tubeSize_f)
    \},
\end{equation}
where the last ball is tightened with the steady state tube size in order to be compliant with the convergence properties of Theorem \ref{thm:robust_MPC}. Having computed a virtual goal configuration, we define the resulting state as $\xGoalVirtual= (\qGoalVirtual, \textbf{0}_{\DimConf})$. We illustrate this process in the right part of Figure~\ref{fig:corr}. \BW{We continue with solving the MPC problem in parallel, line \ref{alg:rbmp:mpc}, where we solve Problem~\eqref{eq:mpc} with the collision-avoidance constraints~\eqref{eq:ball_constraint} and initial tube constraint~\eqref{eq:real_time_initial_tube}. The inputs are the predicted tube, $\xNomPred$ and $\tubeSizePred$, balls $\tupleBalls$ and the virtual goal $\xGoalVirtual$}. Solving the problem results in an optimized nominal trajectory and control inputs. \BW{While the MPC solver is running, we continue by predicting our future tube, line \ref{alg:rbmp:predict_tube}, and run our auxiliary controller, line \ref{alg:rbmp:aux_start}-\ref{alg:rbmp:aux_end}. Next, we shift our nominal trajectory $\NrAuxSteps$ times and append the last state $\NrAuxSteps$ times, line \ref{alg:rbmp:shift_repeat}, serving as a feasible trajectory for the next iteration, line \ref{alg:rbmp:end}. The number of auxiliary steps defines a tunable time slot, $\NrAuxSteps \timeStep$, which enables parallel execution of the auxiliary controller and the MPC solver. It is defined by the user and helps to meet any real-time requirements.
\\
The following theorem shows that running our proposed algorithm guarantees feasibility and convergence to the goal state.}
\BW{\begin{theorem}
\label{thm:robust_MPC_coll_avoid}
Consider the nonlinear system~\eqref{eq:dynamics_discretized} with $\paramME\in\Theta$ and Algorithm~\ref{alg:rbmp}. 
Suppose further that the corridor reaches from $\qStart$ to the target $\qGoal$.
Then, 
\begin{itemize}
    \item Feasibility: All the optimization problems in Algorithm~\ref{alg:rbmp} are feasible for all $k\in\mathbb{N}$;
    \item Constraint satisfaction:
$\q(k) \in \setCf$, $\qdot(k) \in \setCdot$, $\u(k) \in \setU$, $\forall k \in\mathbb{N}$;
    \item Convergence: $\underset{k\rightarrow\infty}{\lim\sup} ~\{\x(k)-\xGoal\}\subseteq\setE(\tubeSize_f)$.
\end{itemize}
\end{theorem}
\begin{proof}
The proof is analogous to Theorem~\ref{thm:robust_MPC}. 
The main change is ensuring that the allocated ball constraints~\eqref{eq:ball_constraint} do not adversely impact convergence and feasibility and that the virtual goal $\xGoalVirtual$ converges to the global goal $\xGoal$ in finite time.
The detailed proof can be found in 
\ifbool{arxiv}{Appendix~\ref{app:proof_mpc_complete}.}{~\cite[App.~E]{arxiv_version_of_paper}.}
\end{proof}
}
\section{Numerical experiments}
\label{sec:experiments}
\NEW{The following section presents numerical experiments with the focus of understanding how our planner's performance is affected by increasing levels of uncertainty. Additional experiments are presented in \ifbool{arxiv}{Appendix~\ref{sec:dd}}{~\cite[App.~F]{arxiv_version_of_paper}}, where we instead derive the controller and model error bounds from real-world data, demonstrating that the proposed method remains effective under realistic uncertainty levels.

}
\subsection{Setup}
\begin{figure*}[t!]
\begin{tikzpicture}
\node[inner sep=0pt] at (0,0) {\includegraphics[width=.33\textwidth]{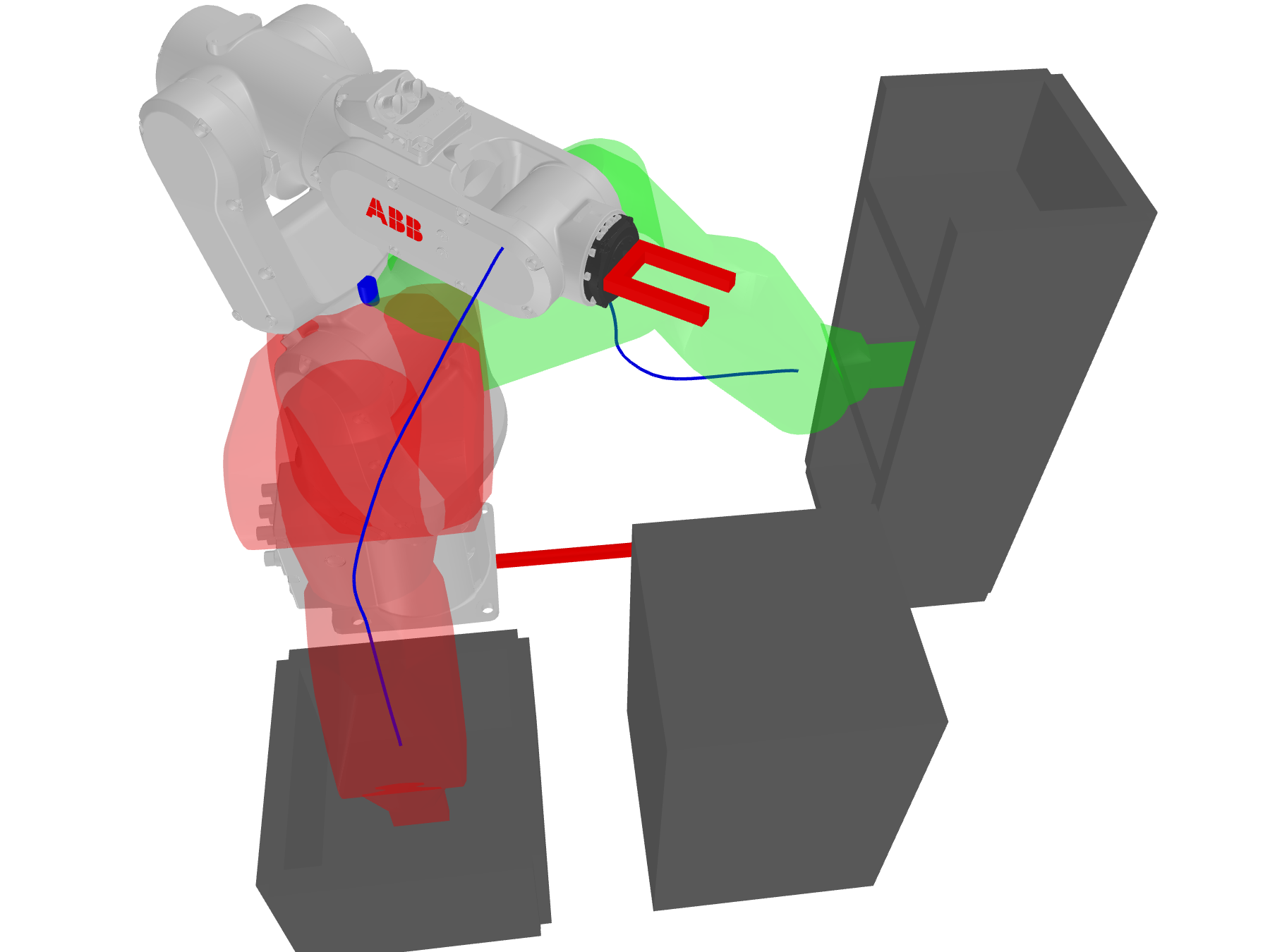}};
\node[inner sep=0pt] at (5, 0) {\includegraphics[width=.33\textwidth]{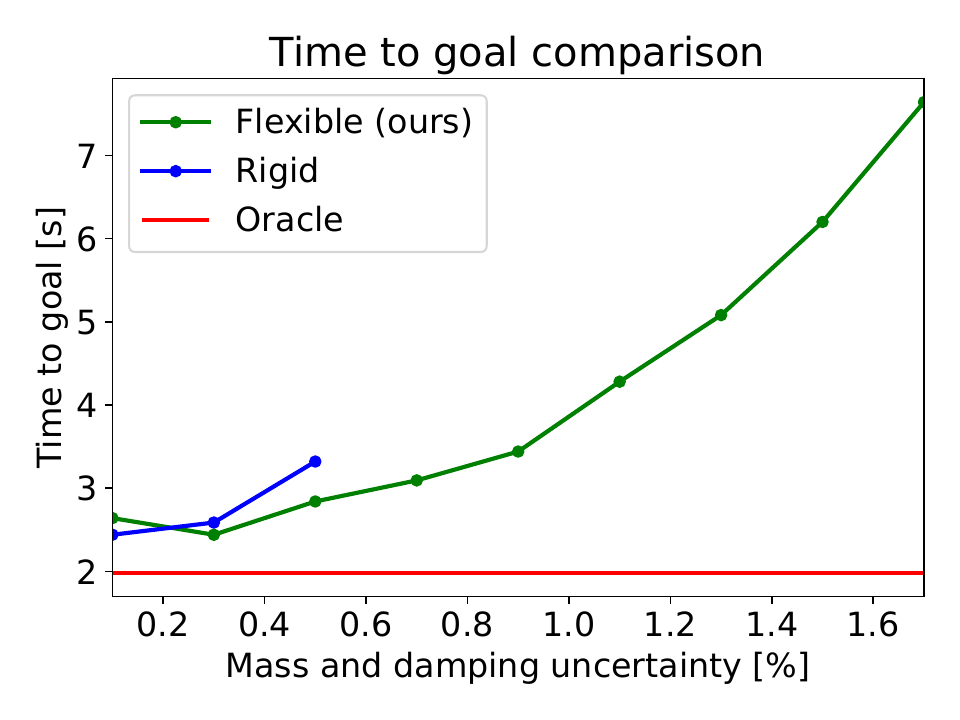}};
\node[inner sep=0pt] at (11, 0) {\includegraphics[width=.33\textwidth]{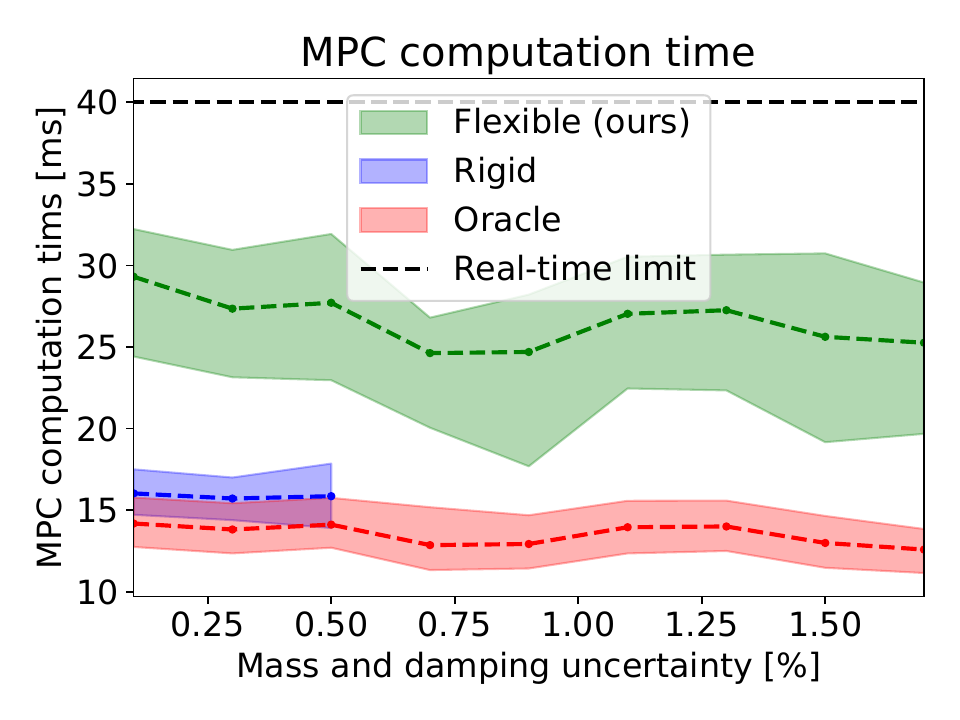}};
\end{tikzpicture}
\caption{
\BW{
\textbf{Left:}
Illustrates our problem scenario, where a 6 DOF robot is supposed to pick an object from a bin and place it on a shelf. The wrist frame position for the resulting motion is illustrated as the blue curve. The starting pick configuration is illustrated as the transparent red mesh. The goal shelving configuration is illustrated as the green transparent mesh. All obstacles are colored gray. 
\textbf{Middle:}
Presents the mean time to goal for the different MPC methods. The horizontal axis represents the amount of uncertainty of mass and damping around nominal values. The vertical axis presents the time to goal. Oracle refers to a nominal MPC without any model errors, Rigid refers to a fixed sized tube, Flexible refers to our approach. A nominal MPC with model error was also tested, but it failed to reach the goal region due to infeasibility. 
\textbf{Right:}
Presents the computation time for the MPC methods in the experiments. The dashed line represents the mean, the shaded region represents the min and max computation times. The black dashed line represents the real time limit of 40 [ms] for the robust methods, that is, our and the rigid approach.
}}
\label{fig:exp}
\end{figure*}
The robot we use for our simulations is an IRB 1100 which is a 6 DOF robot, illustrated in the left part of Figure~\ref{fig:exp}.
\BW{The configuration space $\setC$ is defined by its joint limits and the velocity constraint set is defined as $\setCdot = \{ \qdot \in \setRconf \: | \: \norm{\qdot}_\infty  \le 2 \}$. The dynamics is based on a proprietary model of an IRB 1100 robot with an additional damping term proportional to the velocity, defined as} $\text{diag}([10^{-1}, 10^{-1}, 10^{-1}, 10^{-2}, 10^{-2}, 10^{-4}])~\cdot~2$. \BW{We simulate the continuous-time dynamics~\eqref{eq:cont_dyanamic} with an Explicit Runge-Kutta method of order 5. We consider parametric uncertainty in the mass of each link and the damping. Gravity error is assumed to be negligible, i.e. $\gravDelta\approx0$ in $\eqref{eq:me_c}$, since it is usually easy to compensate with a disturbance observer. We demonstrate our proposed method on a common robotic application, illustrated in the left part of Figure \ref{fig:exp}, which shows a pick and place scenario.
}
\BW{\subsubsection{Convex acceleration set}
We compute the convex acceleration constraint $\setAcc$ in~\eqref{eq:func_fl_cond} using sampling. 
We start with a nominal box constraint set for the acceleration, $\setAcc = \{\a \in \setRconf \: | \: \norm{\a}_\infty \le 20 \}$, represented in vertex form. Then, we uniformly sample $10^5$ states $(\q, \qdot) \in \setX$. For each sampled state and control input vertex $\a\in\setAcc$, we verify condition~\eqref{eq:func_fl_cond}. If the condition is violated, we shrink the acceleration set uniformly by 1~\% and then repeat the process until it is satisfied for all states and control inputs. 
\subsubsection{Model error constants}
The model error constants are also computed using a sampling-based approach. We compute a batch containing $10^6$ random states and control inputs, from which we compute \eqref{eq:me_a}-\eqref{eq:me_c} by using a running max. We check for convergence between the batches by checking the largest difference of the constants in-between the batches. The process is stopped if the difference is less than $10^{-5}$.
}
\subsubsection{Corridor planning}
\BW{
To produce a corridor, we use the hybrid solution presented in \cite{pbrm}. That is, we fist learn a deep neural network representing the SCDF, referred to as an nSCDF, by over-approximating the wrist and tool with a sphere to reduce the dimensionality. When the nSCDF is negative, we fallback on a conventional collision-detector to refine the collision query. To find an initial path, we start by creating a roadmap where the nSCDF is positive. Then, we connect the query points to the graph. To produce a corridor around the path, we discretize the path and query the nSCDF for the distances. For parts where the nSCDF is negative, we use the collision detector, where we sample $10^3$ configuration within a ball of size $0.1$ [rad], shrinking the ball if any collision is detected until all points within the ball are collision-free.
}
\BW{\subsubsection{Methods} 
\label{sec:benchmarks}
We compare the following methods.
\begin{itemize}
    \item \textbf{Flexible:} Our approach, i.e. \eqref{eq:mpc} with \eqref{eq:ball_constraint} and \eqref{eq:real_time_initial_tube}.
    \item \textbf{Rigid:} Standard tube MPC, i.e. same as ours but executed with a constant tube size $\delta$ based on the worst-case model mismatch. \NEW{Thus, similar assumption as in \cite{safe_n_fast_rmpc, smg_mpc}.}
    \item \textbf{Nominal:} The MPC in \eqref{eq:mpc} with \eqref{eq:ball_constraint}, but without robustness features, i.e., $\delta=0$. \NEW{It is similar to \cite{ciao}, but extended to multi-body robots.}
    \item \textbf{Oracle:} The same as nominal, but with a perfect model. This provides a lower bound on the performance. 
\end{itemize}}
All methods are executed with the following parameters: $\matQ=\text{diag}((\textbf{1}_{\DimConf} \cdot 10,  \textbf{1}_{\DimConf} \cdot 0.01))$, $\matQ_e = \text{diag}(\textbf{1}_{\DimState} \cdot 10^4)$ and $\matR= \text{diag}(\textbf{1}_{\DimConf} \cdot 10^{-3})$. A constant control input is applied every $\timeStep=10$ [ms]. The robust MPC methods are optimized every $\NrAuxSteps=4$ steps, resulting in a real-time limit of $40$ [ms]. \NEW{The horizon length was set to the largest value that maintained real-time feasibility in trial runs, resulting in $\NrHorizon=15$.}
\subsubsection{Simulations}
\BW{
All MPC schemes are solved on a laptop with an Intel i5-1155G7 CPU. We repeat each simulation $3$ times, re-sampling the uncertainty parameter $\paramME\in\Theta$ uniformly for each run. To evaluate the scaling capabilities, the methods are tested with different levels of uncertainty. If the method was not able to reach the goal within $100$ seconds, it was stopped and labeled as unsuccessful}. To verify the SCDF, we collision-checked all trajectories returned from the methods with a conventional collision checker using the exact obstacle and robot geometries. The robot was defined to have reached the goal if its state is within an $\varepsilon$-ball of radius $0.01$ around the goal state. We measure performance based on the time it takes to reach the goal region and how the method scales with increasing uncertainty.
\subsection{Results}
\label{sec:results}
The offline computation times are presented in Table~\ref{tab:experiments:offline}.
\begin{table}[H]
    \centering
    \caption{Average computation times for the offline tasks.}
    \label{tab:experiments:offline}
    \begin{tabular}{l|l }
    Offline task & Computation time [s] \\
    \hline
    Learn nSCDF & 32000  \\
    Convex acceleration set \eqref{eq:func_fl_cond} & 4  \\
    Model error constants \eqref{eq:me_a}-\eqref{eq:me_c} & 7900  \\
    Computing controller (\ifbool{arxiv}{Appendix \ref{sec:cvx_control_syn}}{\cite[App. B]{arxiv_version_of_paper}}) & \NEW{60}
    \end{tabular}
\end{table}
\noindent\BW{\textit{Corridor}: Finding a path takes $40$ [ms] and to produce a corridor with the hybrid SCDF takes roughly $5-6$ [s]. \textit{Robustness \& performance:} The times to reach the goal for different level of uncertainty is presented in the middle of Figure~\ref{fig:exp}.} None of the methods resulted in trajectories that were in collision. The nominal MPC is not included in the figure since it resulted in infeasibility issues in each run due to the model errors. This shows the importance of including robustness in the design. \BW{For the robust methods, we see an intuitive trend, larger uncertainty yields longer time to reach the goal. This is a design feature to ensure feasible and safe motion. \NEW{For the proposed robust method, we see an intuitive behavior, larger uncertainties, lead to larger constants~\eqref{eq:me_a}-\eqref{eq:me_c}, which lead to more cautious operation, and thus a longer execution time.} Comparing the performance between the robust methods, we see that for lower uncertainties, the methods perform basically the same, but with increasing uncertainties, the tightening becomes too conservative for the rigid rube. Our method shows superior scaling capabilities compared to the rigid tube, being able to scale to more than $3$ times higher level of uncertainty.
\\
\textit{Online complexity:} Assigning the balls and computing the virtual goal was observed to take max 1 [ms]. The main computational bottleneck is in solving the MPC problem. We present the statistics of the solvers computation times in the right part of Figure \ref{fig:exp}. Observing the plot, we see that solving the nominal MPC is done very fast, with a max time of roughly $15$ [ms]. Naturally, adding additional robustness features adds more computation time, roughly a factor $2$. We observe that our flexible tube has a max computation time below $33$ [ms], which therefore fulfills our real-time requirement, since we take $\NrAuxSteps=4$ time steps with the auxiliary controller.
}
\section{Conclusions}
\label{sec:conclusions}
We have presented a novel convex robust motion planning solution for manipulators that gives robustness guarantees to bounded model errors and results in collision-free motion. One of the main benefits is that we derived a convex optimization problem, which can be solved fast and reliably. From the numerical experiments, we observed that a robust design of the MPC is necessary to maintain feasibility. Compared to a more standard robust MPC formulation, our approach was less conservative and scaled to over three times larger levels of uncertainty. 
\bibliographystyle{ieeetr}
\bibliography{main}

\ifappendix
\appendix
\subsection{Model error derivation}
\label{sec:model_err_der}
In the following, to increase readability, we drop the input arguments to the functions, e.g. $\matM = \matM(\q)$. From the manipulator dynamics \eqref{eq:dyn} we obtain
\begin{align*}
    \qddot &= \matM^{-1}(\u - \matC \qdot - \grav) \\
           &\stackrel{\text{FL} \; \eqref{eq:fdb_lin_map}}{=} \matM^{-1}(\matMnom \a + \matCnom \qdot + \gravNom - (\matCnom + \matCerr ) \qdot - (\gravNom + \gravErr))  \\
           &= \matM^{-1}\matMnom \acc - \matM^{-1}\matCerr \qdot - \matM^{-1} \gravErr \\
           & \stackrel{\star}{=} (\matI + \matMdelta) \acc - \matM^{-1}\matCerr \qdot - \matM^{-1}\gravErr \\
           &= \acc + \matMdelta\acc - \matM^{-1}\matCerr \qdot - \matM^{-1}\gravErr \\
           &= \a + \funcModelError(\q, \qdot, \acc).
\end{align*}
In the above, FL denotes feedback linearization. The $\star$ denotes the step where we decompose the matrix in front of the acceleration input into $\matI + \matMdelta$ as follows
\begin{align}
    &(\matMnom + \matMerr)^{-1} \matMnom = \matI + \matMdelta \nonumber \\
    &\matMnom = (\matMnom + \matMerr) +  (\matMnom + \matMerr)\matMdelta \nonumber \\
    &\matMdelta = -(\matMnom + \matMerr)^{-1} \matMerr
\end{align}
The function $\funcModelError(\q, \qdot, \acc) : \setR^{\DimState} \times \setR^{\DimConf} \mapsto \setRconf$ defines the state and input dependent model error on the following form
\begin{align}
    \Delta(\q, \qdot, \acc) &= \matMdelta \acc + \matCdelta \qdot + \gravDelta, \\
    \matMdelta &= - \matM^{-1} \matMerr, \\ 
    \matCdelta &= - \matM^{-1} \matCerr, \\
    \gravDelta &= - \matM^{-1} \gravErr.
\end{align}
\subsection{Auxiliary controller and Lyapunov matrix}
\label{sec:cvx_control_syn}
We present the optimization problem to compute $\matP,\matK$ in Appendix~\ref{sec:cvx_control_syn_opt_prob}, and derive it in Appendix~\ref{sec:derive_opt_prob}. How we selected a suitable pair $\matP$ and $\matK$ in the experiments is presented in Appendix~\ref{sec:opt_candidate_selection}.
\subsubsection{Convex optimization program}
\label{sec:cvx_control_syn_opt_prob}
We solve the following semidefinite program to produce the gain $\matK$ and Lyapunov matrix $\matP$:
\begin{subequations}
\label{eq:lmi}
\begin{align}
\underset{\matE, \matY, c^2_{x, i}, c^2_{u, j}, \bar{w}^2 }{\operatorname{min}} 
    \quad 
    & 
    L(\bar{w}^2, c^2_{x, 1}, \hdots, c^2_{x, m}, c_{u, 1}^2, \hdots, c_{u, n}^2)
    &
    \\
\textrm{s.t.} 
\quad 
& 
\begin{bmatrix}
\rho^2 \matE, \: (\matA\matE + \matB\matY)^\top \\
(\matA\matE + \matB\matY)^\top, \: \matE
\end{bmatrix}
\succcurlyeq
0 
& 
\\
& 
\begin{bmatrix}
c_{x, i}^2, \: [\matA_x]_i \matE \\
([\matA_x]_i \matE)^\top, \: \matE
\end{bmatrix}
\succcurlyeq
0, \forall i \in \setN_{1:m}, \label{eq:lmi:lmi_state} \\
& 
\begin{bmatrix}
c_{u, i}^2, \: [\matA_u]_i \matY \\
([\matA_u]_i \matY)^\top, \: \matE
\end{bmatrix}
\succcurlyeq
0, \forall j \in \setN_{1:n},  \label{eq:lmi:lmi_control} \\ 
& 
\begin{bmatrix}
\bar{w}^2, \: \v_{\setW}^\top  \label{eq:lmi:lmi_disturbance} \\
\v_{\setW}, \: \matE
\end{bmatrix}
\succcurlyeq
0, \forall \v_{\setW} \in \vertexrep{\setW},
\end{align}
\end{subequations}
with the objective function defined as 
\begin{align}
    \label{eq:lmi:loss}
    & L(\bar{w}^2, c^2_{x, 1}, \hdots, c^2_{x, m}, c_{u, 1}^2, \hdots, c_{u, n}^2) = \nonumber \\
    & \frac{1}{2 ( 1 - \rho)}
    \left (
    (m + n) \bar{w}^2
    +
    \sum_{i=1}^{m} 
    c_{x, i}^2
    + 
    \sum_{i=1}^{n} 
    c_{u, i}^2
    \right ).
\end{align}
The gain and Lyapunov matrix is computed from the following relationship $\matE = \matP ^{-1}$ and $\matY = \matK \matE$. The decision variables $c_{x, i}^2, i \in \setN_{1:m}$ and $c_{u, j}^2, j \in \setN_{1:n}$, control the amount of tightening on the state and control constraints, respectively. In this context, the set $\setW$ is a box set of the model error, including both the model error due to model uncertainty \eqref{eq:model_error_exp}, and the discretization error \eqref{eq:dynamics_discretized}. Here $\vertexrep{\cdot}$ corresponds to the vertices of this box. The inputs are the state constraints, $\matAx \in \setR^{m \times \DimState}$ and $\b_x \in \setR^{m}$, the control input constraints, $\matAu \in \setR^{n \times \DimConf}$ and $\b_u \in \setR^{n}$, and the contraction rate, $\rho\in(0,1)$. We assume polytopic constraints, represented in its half-plane form, i.e. for the state constraints the form is
\begin{equation}
\setX = \{\x \in \setR^{\DimState} \: | \: [\matAx]_i \x \le b_{x, i}, \: i \in \setN_{1:m} \},
\end{equation}
where $[\matAx]_i$ is the $i$:th row of the matrix $\matAx$ and $b_{x, i}$ is the $i$:th element of $\bx$.
\subsubsection{Derivation of optimization problem}
\label{sec:derive_opt_prob}
Our goal is to derive an optimization problem where the amount of tightening is included in the cost, thereby allowing us to reduce the conservatism in the tightening. 
\\\\
In order to achieve this, we first have to derive an expression for the tightened state and input constraints. Then, we present LMI's which allows us to include the tightening in the optimization. This results in a non-convex objective which we as a last step convexify, ending up with our proposed optimization problem.
\\\\
First, the following linear matrix inequalities (LMIs) are a standard reformulation of inequality~\eqref{eq:contraction_req} (cf.~\cite{boyd_lmi}):
\begin{subequations}
\label{eq:lmi_vaniall}
\begin{align}
\begin{bmatrix}
\rho^2 \matE, \: (\matA\matE + \matB\matY)^\top \\
(\matA\matE + \matB\matY)^\top, \: \matE
\end{bmatrix}
\succcurlyeq
0. 
\end{align}
\end{subequations}
The worst case disturbance is defined as
\begin{equation}
    \wWC = \max_{\w \in \setW} \norm{\w}_{\matP},
\end{equation}
A robust positive invariant (RPI) set can be computed that contains the worst case disturbance, having the following tube size
\begin{equation}
    \label{eq:wc_tube}
    \bar{\tubeSize} = \frac{\bar{w}}{1-\rho}.
\end{equation}
The rigid tube MPC in Section~\ref{sec:benchmarks} uses the above tube size to tighten its constraints.
\\\\
Next, we focus on how to introduce the tightenings into the optimization problem. We start by deriving an expression for the state constraint tightening. We require that around a reference state, $\xNom$, the tube with size $\tubeSize$ should not violate the state constraints. That is, for each $i \in \setN_{1:m}$, we want
\begin{equation}
    \label{eq:state_const_cond}
    [\matAx]_i \x \le b_{x, i}, \forall \x \: : \:  \norm{\x - \xNom}_\matP \le \tubeSize.
\end{equation}
The error is defined as $\e = \x - \xNom$. We introduce
\begin{equation}
    \e = \matPinvSqrt \eTrf.
\end{equation}
Now, we input the above into the condition of \eqref{eq:state_const_cond} resulting in
\begin{equation}
    [\matAx]_i(\xNom + \matPinvSqrt \eTrf) \le b_{x, i}, \: \forall \norm{\eTrf} \le \tubeSize.
\end{equation}
The worst case error maximizer is the following
\begin{equation}
    \eTrfStar = \frac{( [\matAx]_i \matPinvSqrt)^\top}{\norm{([\matAx]_i\matPinvSqrt)^\top}} \: \tubeSize.
\end{equation}
Thus, the tightened constraints become
\begin{equation}
    [\matAx]_i \xNom + \| ([\matAx]_i \matPinvSqrt)^\top \|  \: \tubeSize \le b_{x, i},
\end{equation}
and we define the tightening constants as
\begin{equation}
    \label{eq:state_tightening}
    c_{x, i} = \norm{([\matAx]_i \matPinvSqrt)^\top}.
\end{equation}
Next, we focus on the control input tightening. Using the control law in \eqref{eq:aux_control_law}, $\acc =\accNom + \matK(\x-\xNom)$, into the above we get
\begin{equation}
    [\matAu]_j \accNom + [\matAu]_j \matK \e \le b_{u,j}.
\end{equation}
This is on the same form as for the state constraints. Thus, following the same approach, the tightening constant can be expressed as
\begin{equation}
    \label{eq:cont_tightening}
    c_{u, j} = \norm{([\matAu]_j \matK \matPinvSqrt)^\top}.
\end{equation}
Having introduced the expressions for both the state and control input tightening, we now address how to include them into the optimization problem. We start by re-writing our tightenings. Focusing on the state inputs, for $i \in \setN_{1:m}$ then \eqref{eq:state_tightening} and $\eqref{eq:wc_tube}$ defines our tightening, which we express as
\begin{align}
    c_{x, i} \bar{\tubeSize} = c_{x,i} \bar{w} \frac{1}{1-\rho},
\end{align}
and split into two inequalities
\begin{align}
    [\matA_{x}]_i \matE [\matA_{x}]_i^\top &\le c_{x,i} ^2, \\
    \max_{\w \in \setW } \sqrt{\w^\top \matP \w } &\le \wWC,
\end{align}
where $\matE=\matP^{-1}$. We rewrite the above into LMI's using the Schur complement \cite{horn_john_matrix}, ending up with the LMI's in \eqref{eq:lmi:lmi_state} and \eqref{eq:lmi:lmi_disturbance}. The control input tightenings follows the same reasoning. Now, we include all the state and control input tightenings in the objective, resulting in
\begin{equation}
    \frac{1}{1-\rho}(\sum^m_{i=1} c_{x,i} \bar{w} +\sum^n_{i=j} c_{u,j} \bar{w}).
\end{equation}
The bi-linear terms makes the cost non-convex. To make it convex we use the inequality of the arithmetic and geometric means \cite{horn_john_matrix}, which results in
\begin{align}
    c_{x,i} \bar{w} &\le \frac{1}{2} ( c_{x,i}^2 + \bar{w}^2), \\
    c_{u,j} \bar{w} &\le \frac{1}{2} ( c_{u,j}^2 + \bar{w}^2),
\end{align}
ending up with the loss in \eqref{eq:lmi:loss}.
\\\\
\NEW{Since the LTI model and constraints have a block structure, we can decompose this LMI into smaller separate LMIs, which are solved more efficiently. The details can be seen in the accompanying open-source code.}
\subsubsection{Candidate selection}
\label{sec:opt_candidate_selection}
We normalized the objective function by dividing the constraint tightenings of the configuration, velocity and control with a representative normalizing factor. For the configuration tightening, we used $0.1$ [rad], which was the padded clearance added in the path planning. For the velocity and control tightening, they were divided with their corresponding max values from the constraints, i.e. $2$ [rad/s] and $20$ [rad/s$^2$], respectively.
\\\\
We compute $20$ values of $\rho$, equally spaced between 0.8 and 0.99. Having solved the optimization problem for all values of $\rho$ resulted in $20$ pairs of $\matK$ and $\matP$. First, all pairs where $\tubeGrowth \ge 1$ was satisfied were removed. From the remaining pairs, we selected the pair that resulted in the smallest max tightening. For the rigid tube MPC, the condition $\tubeGrowth \ge 1$ was ignored, otherwise the same selection rule was used.
\subsection{Proof of Proposition 2}
\label{sec:proof_cl_error_prop}
It holds that
\begin{align*}
\norm{\x_+-\xNom_+}_\matP &= \norm{\matAcl (\x - \xNom) + \matB \Delta(\x, \acc)+ \funcDiscError(\x, \acc)}_\matP \\
&\leq \norm{\matAcl (\x - \xNom)}_\matP +\norm{\matB \Delta(\x, \acc) + \funcDiscError(\x, \acc)}_\matP \\
& \hspace{-17.5pt} \stackrel{\eqref{eq:contraction_req},~\mathrm{Prop.}~\ref{prop:delta_beta}}{\leq} \rho \norm{\x - \xNom}_\matP+\beta(\x, \acc),
\end{align*}
with $\matAcl=\matA+\matB\matK$. 
The uncertainty bound $\beta$ satisfies
\begin{align*}
\beta(\x,\acc)-\beta(\xNom, \accNom) &\stackrel{\eqref{eq:model_error_upper_bound}}{=}a(\norm{\acc}-\norm{\accNom})+b(\norm{ \matV\x}-\norm{\matV \xNom}) \\
&\leq  a\norm{\matK(\x-\xNom)}+b\norm{\matV(\x-\xNom)} \\
&\stackrel{\eqref{eq:L_beta}}{\leq} L_\beta \norm{\x-\xNom}_{\matP} \leq L_\beta \delta.
\end{align*} 
Combining both bounds yields
\begin{align*}
\norm{\x_+-\xNom_+}_\matP & \leq \rho \norm{\x - \xNom}_\matP+\beta(\x, \acc)\\
&\leq \rho \delta + L_\beta \delta + \beta(\xNom,\accNom)=\delta_+. \qedhere
\end{align*}
\subsection{Tube in ball constraint}
\label{sec:tube_in_ball_const}
To compute the tightened ball constraints in~\eqref{eq:ball_constraint}, we compute a ball that over-approximates the projection of the tube on the configuration space:
\begin{align}
    \label{eq:tube_in_ball_cond}
    \setBQ(\tubeSize) \subseteq \{ \q(\x) \in \setR^{\DimConf} \: | \norm{\q} \le \tubeSize\cdot r_p \}
\end{align}
with a suitable radius $r_p>0$. 
To ensure the above, we start by projecting $\setE$ onto the configuration space. The Lyapunov matrix is structured as
\begin{equation}
    \matP
    = 
    \begin{bmatrix}
    \matP_{11},  \matP_{12} \\
    \matP_{21},  \matP_{22} \\
    \end{bmatrix}.
\end{equation}
The projection of the ellipsoid onto the configuration space, $\setE_{\q}$, is done with the Schur-complement
\begin{equation}
    \matP_{\q} = \matP_{11} - \matP_{12} \matP_{22}^{-1} \matP_{21}.
\end{equation}
The eigenvalues give the principal axes of the resulting ellipsoid. We compute a radius that encompasses the projected ellipsoid as
\begin{equation}
    \label{eq:radius_ball_const}
    r_p = 1 / \sqrt{\lambda_{\text{min}}(\matP_{\q})}.
\end{equation}
Now, to fulfill condition \eqref{eq:tube_in_ball_cond}, we simply shrink the given ball's radius, $r$, by $r_p \tubeSize$. Thus our homothetic constraint tightening becomes
\begin{equation}
    \setB \ominus \setBQ(\tubeSize) = \{\q \in \setRconf \: | \: \norm{\bubbleCenter-\q} \le r - r_p \tubeSize \}.
\end{equation}
\subsection{Proof of Theorem~\ref{thm:robust_MPC_coll_avoid}}
\label{app:proof_mpc_complete}
\BW{The proof follows the same steps as Theorem~\ref{thm:robust_MPC} and we only highlight the differences related to the added and updated ball constraints~\eqref{eq:ball_constraint} and the virtual goal~\eqref{eq:assign_vgoal}.\\
\textbf{Part I:}
The system starts at steady-state, the allocation rule~\eqref{eq:assign_ball_rule} picks balls with largest margin around the initial trajectory. 
Note that the corridor satisfies~\eqref{eq:condition_corridor}, which also ensures that the ball $\setB$ around the initial condition is larger than $\setE_{\q}(2\epsilon+\delta_f)$. 
Hence, a feasible solution to Problem~\eqref{eq:mpc} with the added obstacles avoidance constraints~\eqref{eq:ball_constraint} is given by staying at the steady-state with $\acc=\zeroVec_{\DimConf}$, i.e., we are guaranteed that the MPC problem is feasible at the start.\\
Next, we show recursive feasibility. 
Analogous to the proof of Theorem~\ref{thm:robust_MPC}, the candidate solution is given by shifting the previous optimal solution by $\NrAuxSteps$ steps.
Compared to Theorem~\ref{thm:robust_MPC}, we need to account for the change in the initial condition~\eqref{eq:predict_tube_size} and the added obstacle avoidance constraints~\eqref{eq:ball_constraint}. 
For the initial constraint~\eqref{eq:predict_tube_size}, the shifted candidate solution $\delta_0=\delta^\star_{\NrAuxSteps}$, $\xNom_0=\xNom^\star_{\NrAuxSteps}$ also provides a feasible solution.  
In particular, Proposition~\ref{prop:tube_dynamics} also ensures that $\hat{\tubeSize}_0\leq \tubeSize^\star_0$ and monotonicity of~\eqref{eq:htmpc:tube_dyn} ensures $\hat{\tubeSize}_{\NrAuxSteps}\leq \tubeSize^\star_{\NrAuxSteps}$.
\\
Regarding the added ball constraints~\eqref{eq:ball_constraint}: If we would simply shift the previously allocated balls, i.e., $\setB_i\leftarrow\setB_{i+\NrAuxSteps}$, then the fact that the candidate sequence is equally shifted compared to the previous feasible solution, ensures feasibility of the candidate sequence, 
The ball assignment~\eqref{eq:assign_ball_rule} is such that the distance to the boundary of the ball is non-decreasing and thus~\eqref{eq:ball_constraint} is also feasible with the new assigned balls. \\
\textbf{Part II:} For all $i\in \setN_{0:\NrAuxSteps-1}$, $\norm{\x(k+i)-\xNom^\star_i}_{\matP} \leq \delta_i^\star$ due to the initial constraint~\eqref{eq:real_time_initial_tube} and the tube construction~\eqref{eq:htmpc:tube_dyn}.
Hence, the tightened constraints~\eqref{eq:htmpc:limits_states}, \eqref{eq:htmpc:limits_controls}, and~\eqref{eq:ball_constraint} ensure $\q(k)\in\setB$, $\x(k)\in\setX$, $\a(k)\in\setAcc$.
Condition~\eqref{eq:func_fl_cond} ensures $\u(k)\in\mathcal{U}$, i.e., the torque limits are respected at each discrete time.
The construction of the balls~\eqref{eq:cball} with the SCDF~\eqref{eq:SCDF} ensures $\q(k)\in\setCf$, i.e., the robot operates in a collision-free configuration,\\
\textbf{Part III:}
We need to ensure that $\xGoalVirtual$ converges to $\xGoal$ in finite time. 
For contradiction, suppose that the intermediate goal $\xGoalVirtual=\c_i$ is not updated for some arbitrarily long time. 
Then, $\xGoalVirtual-\xNom_0^\star$ converges exponentially to zero, analogous to the proof of Theorem~\ref{thm:robust_MPC}, given that $\qGoalVirtual\in(\setC \cap \setB_{\NrHorizon}) \ominus \setE_\q(\epsilon+\tubeSize_f)$.
Thus, in finite time, $\norm{\xNom_{\NrHorizon}^\star-\xGoalVirtual}_\matP \leq \epsilon$, for any $\epsilon>0$. 
From~\eqref{eq:condition_corridor}, we know that $\c_{i+1}\in\setB(\c_i)\ominus \setE_\q(2\epsilon + \tubeSize_f)$.
The ball selection~\eqref{eq:assign_ball_rule} for $\setB_{\NrHorizon}$ maximizes the distance around $\xNom_{\NrHorizon}$, which in combination with $\norm{\c_i-\xNom_{\NrHorizon}}_\matP\leq \epsilon$ ensures that $\c_{i+1}\in\setB_{\NrHorizon}\ominus\setE_\q(\epsilon + \tubeSize_f)$. 
Hence, $\c_{i+1}$ satisfies condition~\eqref{eq:assign_vgoal}, ensuring that the virtual goal is updated. Thus, $\xGoalVirtual$ is updated in finite time. 
Given the finite number of possible virtual goals based on the discretization,  $\xGoalVirtual=\xGoal$ in finite time. 
Convergence of $\x(k)$ to the neighborhood of the goal follows analogous to Theorem~\ref{thm:robust_MPC}.}
%
%
%
%
\NEW{
\subsection{Numerical experiments with data from hardware experiments}
\label{sec:dd}
The following section aims to show that our uncertainty relationship \eqref{eq:model_error_upper_bound} can be seen in real-world data and that our planner works under realistic uncertainty levels. We show this by first presenting a data driven approach to compute the model error constants in \eqref{eq:me_a}-\eqref{eq:me_c} directly from data obtained from hardware experiments. We then verify that our approach still outperforms the used benchmarks from Section \ref{sec:benchmarks}.
\\\\
We describe our setup in Section~\ref{sec:dd:setup}. Next, we present our data source and how to extract the relevant data from it in Section~\ref{sec:dd:dataset}. Then, in Section \ref{sec:dd:mec}, we move over to the main methodology of how to estimate the model-error constants from the data. Section~\ref{sec:dd:sim} presents the setup for the simulations, ending with presenting the results in Section~\ref{sec:dd:res}.
\subsubsection{Setup}
\label{sec:dd:setup}
For the experiments, we consider a Franka Emika robot which is a 7 DOF robot. We define the configuration space, velocity and acceleration constraints as $\setC = \{ \qdot \in \setRconf \: | \: \norm{\qdot}_\infty  \le  \pi \} $, $\setCdot = \{ \qdot \in \setRconf \: | \: \norm{\qdot}_\infty  \le 2 \}$ and $\setAcc = \{\a \in \setRconf \: | \: \norm{\a}_\infty \le 20 \}$, respectively.
\subsubsection{Data processing}
\def\setD{\mathcal{D}}
\label{sec:dd:dataset}
We use the publicly available data set from \cite{FER}, which contains closed-loop state trajectories measured with a sampling time of $10$ [ms]. The desired acceleration is sampled with a different sampling time, hence, we linearly interpolate it to that of the measured data, yielding a coherent dataset. We extract the state, the desired acceleration, and the state in the subsequent time step. From the relationship in \eqref{eq:dynamics_discretized}, the deviation in the predicted and observed subsequent state defines our empirical model mismatch. Since our focus is on model errors, we neglect mismatch in the position, which is primarily due to measurement noise. Repeating the calculations for all the data points results in the data set
\begin{equation}
     \setD = \{ (\x_i, \a_i, \Delta_i) \}^N_{i=1},
\end{equation}
where $N$ is the number of data points.
\subsubsection{Model error constants}
\label{sec:dd:mec}
\begin{figure}
    \centering
    \includegraphics[width=1.0\linewidth, trim={0.5cm 5cm 0.5cm 5.0cm},clip]{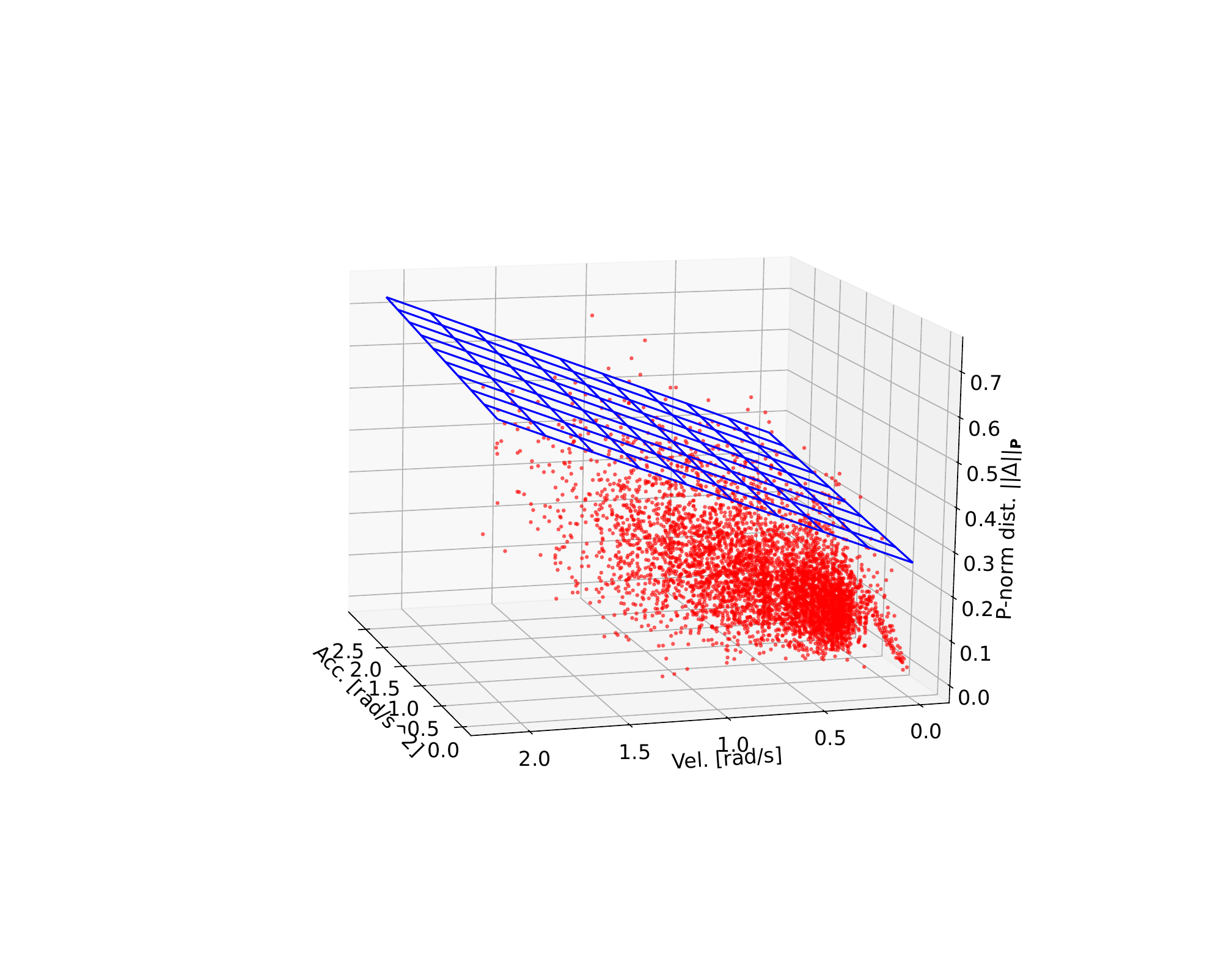}
    \caption{Shows our resulting conservative estimator (blue plane), where the constants are computed from measurement data. The visualization shows a random subset of the used measurement data (red points).}
    \label{fig:bounding_plane}
\end{figure}
Having compiled the dataset, the next step is to find a suitable auxiliary controller with a corresponding bound on the disturbances. 
\\\\
Assuming that $\matP$ and $\matK$ is given, we compute the model error constants, $a$, $b$ and $c$ in \eqref{eq:me_a}-\eqref{eq:me_c}, by reformulating \eqref{eq:model_error_upper_bound} as a linear regression problem, i.e., we try approximate the disturbance according to
\begin{equation}
    ||\Delta||_\matP \approx \tilde{\beta}(\x, \a) := a \norm{\acc} + b \norm{\qdot(\x)} + \tilde{c}.
\end{equation}
The constants $a$, $b$ and $\tilde{c}$ are computed by solving the least square problem 
\begin{equation}
    (a, b, \tilde{c}) = \underset{a, b, \tilde{c}}{\text{argmin}} \frac{1}{N} \sum_{(\x_i, \acc_i, \Delta_i) \in \setD} \norm{\tilde{\beta}(\x_i, \acc_i) - \norm{\Delta_i}_\matP}^2.
\end{equation}
Afterwards, the bias term, $\tilde{c}$, is increased to make $\beta$ a conservative over-approximation of the model error $\Delta$. Specifically, we set 
\begin{equation}
    c = \tilde{c} + 2 \cdot \sigma,
\end{equation}
where $\sigma$ is the standard deviation of our prediction error. This yields the following data-driven estimate for the model error bound 
\begin{equation}
    \beta(\x, \a) := a \norm{\acc} + b \norm{\qdot(\x)} + c,
\end{equation}
which upper bounds roughly $95$ \% of the data. We illustrate the resulting conservative estimator in Figure \ref{fig:bounding_plane}.
\\\\
Having presented how we compute the model error constant, we next discuss how to select a $\matP$ and $\matK$ matrix. We approach this by first solving 50 instances of the optimization problem in Appendix \ref{sec:cvx_control_syn} with $\rho$ equally spaced between $0.5$ and $1.0$. Each solved instance results in matrices $\matP$ and $\matK$, from which we estimate the model error constants according to the previous discussion. Then, we pick an auxiliary controller by the selection rule specified in Appendix~\ref{sec:opt_candidate_selection}.
\subsubsection{Simulations}
\label{sec:dd:sim}
We use the same methods as specified in Section \ref{sec:benchmarks}. We apply a constant control input every $\timeStep=10$ [ms]. The robust MPC methods are optimized every $\NrAuxSteps=4$ steps. We evaluate the methods on 10 randomly generated corridors, where we sample 3 configurations uniformly within the configuration space and set the radius of the corridor to $0.2$ [rad].
\\\\
For the simulations, we randomly sample disturbances from the following multivariate Gaussian distribution
\def\matW{\textbf{W}}
\def\vecb{\textbf{b}}
\begin{equation}
    |\Delta(\x, \acc)| \sim \mathcal{N}(\matW_v |\qdot(\x)| + \matW_a |\acc| + \vecb, \: \bm{\Sigma}).
\end{equation}
In the above, $|\cdot|$ is the element-wise absolute value function. Thus, the mean is represented by a linear regression model, with weight matrices, $\matW_v \in \setR^{\DimState\times\DimConf}$ and $\matW_a \in \setR^{\DimState\times\DimConf}$, and bias term, $\b \in \setR^{\DimState}$. The parameters and the covariance matrix, $\Sigma \in \setR^{\DimState \times \DimState}$, are computed from the data set. The signs of the elements in the disturbance is assigned randomly with equal probability.
\subsubsection{Results}
\label{sec:dd:res}
The results of the experiments are presented in Table \ref{tab:dd_exp:results}.
\begin{table}[H]
    \centering
    \caption{Presents statistics from 10 randomly generated corridors.}
    \label{tab:dd_exp:results}
    \begin{tabular}{l|l l}
            & \multicolumn{2}{c}{Time to goal [s]} \\
    Method  & Mean  &  Std \\
    \hline
Flexible (ours)      &      9.04 &       1.35    \\
Rigid                &       - &       -    \\
Nominal              &       - &       -    \\
Oracle               &       2.17 &       0.39    \\
    \end{tabular}
\end{table}
The outcome mirrors the pattern observed in Section \ref{sec:results}. The oracle MPC, i.e., the nominal MPC without disturbances, is the fastest. However, it is an unrealistic benchmark, only used to form a lower bound on the time duration. If the same planner is used with disturbances, i.e., the nominal MPC, it ends up in an infeasible state, not being able to operate. The rigid tube is too conservative and cannot be executed due to that the constraint tightening is too large. Our method is able to operate in the presence of the disturbances, executing safe motion for all problem instances. }
\fi
\end{document}